\icmltitlerunning{Zonotope Hit-and-run for Efficient Sampling from Projection DPPs}
\patchcmd{\NAT@test}{\else \NAT@nm}{\else \NAT@nmfmt{\NAT@nm}}{}{}
\DeclareRobustCommand\citepos
   \let\NAT@nmfmt\NAT@posfmt
\let\NAT@ctype\z@\NAT@partrue
\let\NAT@orig@nmfmt\NAT@nmfmt
\def\NAT@posfmt#1{\NAT@orig@nmfmt{#1's}}
\definecolor{mypink1}{rgb}{0.858, 0.188, 0.478}
\definecolor{mypink2}{RGB}{219, 48, 122}
\definecolor{mypink3}{cmyk}{0, 0.7808, 0.4429, 0.1412}
\definecolor{mygray}{gray}{0.8}
\definecolor{babyblue}{rgb}{0.54, 0.81, 0.94}
\definecolor{citrine}{rgb}{0.89, 0.82, 0.04}
\definecolor{formalshade}{rgb}{0.90,0.95,0.98}    
\newenvironment{formal}{%
\fontfamily{cmss}\selectfont
\fontsize{8}{10}\selectfont
  \MakeFramed{\advance\hsize-\width\FrameRestore}%
  \noindent\hspace{10cm}
  \begin{adjustwidth}{}{5pt}%
  \vspace{-3pt}%
}
{%
  \vspace{2pt}\hspace{3pt}\end{adjustwidth}\endMakeFramed%
}
\def\twofig{.43\textwidth}
\def\threefig{.3\textwidth}
\newcommand\Vol{\operatorname{Vol}}
\newcommand{\nystrom}{Nystr\"{o}m\xspace}
\newcommand\bA{\mathbf{A}}
\newcommand\bB{\mathbf{B}}
\newcommand\bC{\mathbf{C}}
\newcommand\bK{\mathbf{K}}
\newcommand\bL{\mathbf{L}}
\newcommand\Zon{\mathcal{Z}}
\newcommand\cB{\mathcal{B}}
\newcommand\cD{\mathcal{D}}
\newcommand\cO{\mathcal{O}}
\newcommand\cT{\mathcal{T}}
\newcommand{\lrb}[1]{\left[ #1 \right]}
\newcommand{\lrp}[1]{\left( #1 \right)}
\newcommand{\lrcb}[1]{\left\{ #1 \right\}}
\newcommand{\lrabs}[1]{\left| #1 \right|}
\newcommand\Prob{\operatorname{\mathbb{P}}}
\newcommand{\Proba}[1]{\Prob\lrb{#1}}
\newcommand\indic{\mathds{1}}
\newcommand\Unif{\operatorname{\mathcal{U}}}
\newcommand\R{\mathbb{R}}
\renewcommand{\top}{\mathsf{\scriptscriptstyle T}}
\renewcommand{\tilde}{\widetilde}
\newtheorem{assumption}[]{Assumption}
\newtheorem{proposition}[]{Proposition}
\newtheorem{remark}[]{Remark}
\begin{document} 

  \twocolumn[
  \icmltitle{Zonotope Hit-and-run for Efficient Sampling from Projection DPPs}



  \icmlsetsymbol{equal}{*}

  \begin{icmlauthorlist}
  \icmlauthor{Guillaume Gautier}{CNRS,INRIA}
  \icmlauthor{R\'emi Bardenet}{CNRS}
  \icmlauthor{Michal Valko}{INRIA}
  \end{icmlauthorlist}

  \icmlaffiliation{INRIA}{INRIA Lille --- Nord Europe, SequeL team}
  \icmlaffiliation{CNRS}{Univ.\,Lille, CNRS, Centrale Lille, UMR 9189 --- CRIStAL}

  \icmlcorrespondingauthor{Guillaume Gautier}{g.gautier@inria.fr}

  \icmlkeywords{DPPs, MCMC, zonotopes, machine learning, ICML}

  \vskip 0.3in
  ]



  \printAffiliationsAndNotice{}  

  \begin{abstract} 
    Determinantal point processes (DPPs) are distributions over sets of items that model diversity using kernels. Their applications in machine learning include summary extraction and recommendation systems.
    Yet, the cost of sampling from a DPP is prohibitive in large-scale applications, which has triggered an  effort towards efficient approximate samplers. 
    We build a novel MCMC sampler that combines ideas from combinatorial geometry, linear programming, and Monte Carlo methods to sample from DPPs with a fixed sample cardinality, also called projection DPPs. 
    Our sampler leverages the ability of the hit-and-run MCMC kernel to efficiently move across convex bodies. 
    Previous theoretical results yield a fast mixing time of our chain when targeting a distribution that is close to a projection DPP, but not a DPP in general. 
    Our empirical results demonstrate that this extends to sampling projection DPPs, i.e., our sampler is more sample-efficient than previous approaches 
    which in turn translates to faster convergence when dealing with costly-to-evaluate functions, such as summary extraction in our experiments. 
  \end{abstract} 

  \section{Introduction}
  \label{s:introduction}

    Determinantal point processes (DPPs) are distributions over configurations of points that encode diversity through a kernel function. DPPs were introduced by \citet{Mac75} and have then found applications in fields as diverse as probability \citep{HKPV06}, number theory \citep{RuSa96}, statistical physics \citep{PaBe11}, Monte Carlo methods \citep{BaHa16Sub}, and spatial statistics \citep{LaMoRu15}. 
    In machine learning, DPPs over finite sets have been used as a model of diverse sets of items, where the kernel function takes the form of a finite matrix, see \citet{KuTa12} for a comprehensive survey. 
    Applications of DPPs in machine learning (ML) since this survey also include recommendation tasks \citep{KaDeKo16,GaPaKo16}, text summarization \citep{DuBa16}, or models for neural signals \citep{SnZeAd13}. 

    Sampling generic DPPs over finite sets is expensive. 
    Roughly speaking, it is cubic in the number $r$ of items in a DPP sample. 
    Moreover, generic DPPs are sometimes specified through an $n\times n$ kernel matrix that needs diagonalizing before sampling, where $n$ is the number of items to pick from. 
    In text summarization, $r$ would be the desired number of sentences for a summary, and $n$ the number of sentences of the corpus to summarize. Thus, sampling quickly becomes intractable for large-scale applications \citep{KuTa12}. 
    This has motivated research on fast sampling algorithms. 
    While fast exact algorithms exist for specific DPPs such as uniform spanning trees \citep{Ald90,Bro89,PrWi98JoA}, generic DPPs have so far been addressed with approximate sampling algorithms, using random projections \citep{KuTa12}, low-rank approximations \citep{KuTa11, GiKuTa12,AKFT13}, or using Markov chain Monte Carlo techniques \citep{Kan13,LiJeSr15,ReKa15,AnGhRe16,LiJeSr16}. 
    In particular, there are polynomial bounds on the mixing rates of natural MCMC chains with arbitrary DPPs as their limiting measure; see \citet{AnGhRe16} for cardinality-constrained DPPs, and \citet{LiJeSr16} for the general case. 

    In this paper, we contribute a non-obvious MCMC chain to approximately sample from \emph{projection DPPs}, which are DPPs with a fixed sample cardinality. 
    Leveraging a combinatorial geometry result by \citet{DyFr94}, we show that sampling from a projection DPP over a finite set can be relaxed into an easier continuous sampling problem with a lot of structure. 
    In particular, the target of this continuous sampling problem is supported on the volume spanned by the columns of the feature matrix associated to the projection DPP, a convex body also called a \emph{zonotope}. 
    This zonotope can be partitioned into tiles that uniquely correspond to DPP realizations, and the relaxed target distribution is flat on each tile. 
    Previous MCMC approaches to sampling projections DPPs can be viewed as attempting moves between neighboring tiles. Using linear programming, we propose an MCMC chain that moves more freely across this tiling. 
    Our chain is a natural transformation of a fast mixing hit-and-run Markov chain \citep{LoVe03} on the underlying zonotope; this empirically results in more uncorrelated MCMC samples than previous work. 
    While the results of \citet{AnGhRe16} and their generalization by \citet{LiJeSr16} apply to projection DPPs, our experiments support the fact that our chain mixes faster.

    The rest of the paper is organized as follows. 
    In Section~\ref{s:dpp}, we introduce projection DPPs and review existing approaches to sampling. 
    In Section~\ref{s:zonotopes}, we introduce zonotopes and we tailor the hit-and-run algorithm to our needs. 
    In Section~\ref{s:experiments}, we empirically investigate the performance of our MCMC kernel on synthetic graphs and on a summary extraction task, before concluding in Section~\ref{s:discussion}.

  \section{Sampling Projections DPPs}
  \label{s:dpp}

    In this section, we introduce projection DPPs in two equivalent ways, respectively following \citet{HKPV06}, \citet{KuTa12}, and \citet{Lyo03}. 
    Both definitions shed a different light on the algorithms in Section~\ref{s:zonotopes}.

    \subsection{Projection DPPs as Particular DPPs}
    \label{s:hough}
      Let $E=[n]\triangleq\{1,\dots n\}$. 
      Let also $\bK$ be a real symmetric positive semidefinite $n\times n$ matrix, and for $I\subset E$, write $\bK_I$ for the square submatrix of $\bK$ obtained by keeping only rows and columns indexed by $I\subset E$. 
      The random subset $X\subset E$ is said to follow a DPP on $E=\{1,\dots,n\}$ with kernel $\bK$ if
      \begin{equation}
      \label{e:dpp}
        \Proba{I \subset X}= \det \bK_{I},\quad\forall I\subset E.
      \end{equation}

      Existence of the DPP described by \eqref{e:dpp} is guaranteed provided $\bK$ has all its eigenvalues in $[0,1]$, see e.g., \citet[Theorem 2.3]{KuTa12}. 
      Note that \eqref{e:dpp} encodes the repulsiveness of DPPs. 
      In particular, for any distinct $i,j \in [n]$,
      \begin{align*}
          \Proba{\{i,j\} \subset X}
          &=
          \begin{vmatrix}
              \mathbf{K}_{ii} & \mathbf{K}_{ij}\\
              \mathbf{K}_{ji} & \mathbf{K}_{jj}
          \end{vmatrix}\\
          &= \Proba{\{i\}\in X} \Proba{\{j\}\in X} - \mathbf{K}_{ij}^2\\ 
          &\leq \Proba{\{i\}\in X} \Proba{\{j\}\in X}.
      \end{align*}
      In other words, $\bK_{ij}$ encodes departure from independence. 
      Similarly, for constant $\bK_{ii},\bK_{jj}$, the larger $\bK_{ij}^2$, the less likely it is to have items $i$ and $j$ co-occur in a sample.

      \emph{Projection DPPs} are the DPPs such that the eigenvalues of $\bK$ are either $0$ or $1$, that is, $\bK$ is the matrix of an \emph{orthogonal} projection. 
      Projection DPPs are also sometimes called elementary DPPs \citep{KuTa12}.
      One can show that samples from a projection DPP with kernel matrix $\bK$ almost surely contain $r=\text{Tr}(\bK)$ points and that general DPPs are mixtures of projection DPPs, see e.g., \citet[Theorem 2.3]{KuTa12}.

    \subsection{Building Projection DPPs from Linear Matroids}
    \label{s:lyons}

      Let $r<n$, and let $\bA$ be a full-rank $r\times n$ real matrix with columns $(a_j)_{j\in[n]}$. 
      The linear matroid $M[\bA]$ is defined as the pair $(E,\cB)$, with $E=[n]$ and
      \begin{equation}
          \label{e:collection_of_bases-cB}
   \cB  = \!\left\{ B\! \subset  [n] :  | B | = r, \left\{a_j\right\}_{j\in B} \text{are independent} \right\}.
      \end{equation}
      A set of indices $B\subset[n]$ is in $\cB$ if and only if it indexes a basis of the columnspace of~$\bA$. 
      Because of this analogy, elements of $\cB$ are called \emph{bases} of the matroid $M[\bA]$. 
      Note that elementary algebra yields that for all $B_1, B_2 \in \cB$ and $x \in B_1 \setminus B_2$, there exists an element $y \in B_2 \setminus B_1$ such that 
      \begin{equation}
      \label{e:basisExchange}
          \left(B_1 \setminus \left\{x\right\}\right) \cup \{y\} \in \cB.
      \end{equation}
      Property~\eqref{e:basisExchange} is known as the \emph{basis-exchange} property. 
      It is used in the definition of general matroids \citep{Oxl03}.

      \citet{Lyo03} defines a projection DPP as the probability measure on $\cB$ that assigns to $B\in\cB$ a mass proportional to $|\det\bB|^2$, where $\bB \triangleq \bA_{:B}$ is the square matrix formed by the $r$ columns of $\bA$ indexed by $B$. 
      Note that this squared determinant is also the squared volume of the parallelotope spanned by the columns indexed by $B$. 
      In this light, sampling a projection DPP is akin to volume sampling \citep{DeRa10}.
      Finally, observe that the Cauchy-Binet formula gives the normalization
      $$ \sum_{B\in\cB}  \lrabs{\det \bA_{:B}}^2 = \det \bA\bA^\top, $$
      so that the probability mass assigned to $B$ is 
      \begin{equation*}
          \frac{ \det {\bA^{\top}}_{B:}
            \det\bA_{:B}}{ \det \bA\bA^{\top} } = \det
          \lrb{\bA^{\top} \lrb{\mathbf{AA}^{\top}}^{-1} \bA}_B.
      \end{equation*}
      Letting 
      \begin{equation}
      \label{e:projectionKernel}
          \bK=\bA^{\top} \lrb{\mathbf{AA}^{\top}}^{-1} \bA,   
      \end{equation}
      gives the equivalence between Sections~\ref{s:hough} and \ref{s:lyons}.

      A fundamental example of DPP defined by a matroid is the random set of edges obtained from a uniform spanning tree \citep{Lyo03}. 
      Let $G$ be a connected graph with $r+1$ vertices and $n$ edges $\{e_i\}_{i\in [n]}$. 
      Let now $\bA$ be the first $r$ rows of the vertex-edge incidence matrix of $G$. 
      Then $B\subset[n]$ is a basis of $M[\bA]$ if and only if $\{e_i\}_{i\in B}$ form a spanning tree of~$G$ \citep{Oxl03}. 
      The transfer current theorem of \citet{BuPe93} implies that the uniform distribution on~$\cB$ is a projection DPP, with kernel matrix \eqref{e:projectionKernel}. 

    \subsection{On Projection DPPs and $k$-DPPs in ML}
    \label{s:kdpps}

      Projection DPPs are DPPs with realizations of constant cardinality $k=r$, where $r$ is the rank of $\bK$. 
      This constant cardinality is desirable  when DPPs are used in summary extraction 
      \citep{KuTa12,DuBa16} and the size of the required output is predefined. 
      Another way of constraining the cardinality of a DPP is to condition on the event $\vert X\vert=k$, which leads to the so-called $k$-DPPs \citep{KuTa12}.
      Projection DPPs and $k$-DPPs are in general different objects. 
      In particular, a $k$-DPP is not a DPP in the sense of \eqref{e:dpp} unless its kernel matrix $\bK$ is a projection. In that sense, $k$-DPPs are non-DPP objects that generalize projection DPPs. 
      In this paper, we show that projection DPPs can benefit from fast sampling methods. 
      It is not obvious how to generalize our algorithm to $k$-DPPs. 

      In ML practice, using projection DPPs is slightly different from using a $k$-DPP. 
      In some applications, typically with graphs, the DPP is naturally a projection, such as uniform spanning trees described in Section~\ref{s:lyons}. 
      But quite often, kernels are built feature-by-feature. 
      That is, for each data item $i\in [n]$, a normalized vector of features $\phi_i\in\R^r$ is chosen, a marginal
      relevance $q_i$ is assigned to item $i$, and a matrix $\bL$ is defined as
      \begin{equation}
      \label{e:textKernel}
          \bL_{ij}=\sqrt{q_i}\phi_i\phi_j\sqrt{q_j}.
      \end{equation}
      In text summarization, for instance, items $i,j$ could be sentences, $q_i$ the marginal relevance of sentence $i$ to the user's query, and $\phi_i$ features such as tf-idf frequencies of a choice of words, and one could draw from a
      $k$-DPP associated to $\bL$ through $\Proba{X=I}\propto \det \bL_I$, see e.g., \citet[Section 4.2.1]{KuTa12}.

      Alternately, let $\bA$ be the matrix with columns $(\sqrt{q_i}\phi_i)_{i\in [r]}$, and assume $r<n$ and~$\bA$ is full-rank. 
      The latter can be ensured in practice by adding a small i.i.d.\,Gaussian noise to each entry of $\bA$. 
      The projection DPP with kernel $\bK$ in \eqref{e:projectionKernel} will yield samples of cardinality $r$, almost
      surely, and such that the corresponding columns of $\bA$ span a large volume, hence feature-based diversity. 
      Thus, if the application requires an output of length $p$, one can pick $r=p$, as we do in Section~\ref{s:app_mnist}.
      Alternatively, if we want an output of size approximately $p$, we can pick $r\geq p$ and independently thin the resulting sample, which preserves the DPP structure \citep{LaMoRu15}.
      
  \vspace{-0.5em}
    \subsection{Exact Sampling of Projection DPPs}
    \label{ss:exact}

      \citet{HKPV06} give an algorithm to sample general DPPs, which is based on a subroutine to sample projection DPPs. Consider a projection DPP with kernel $\bK$ such that $\text{Tr}(\bK)=r$, \citepos{HKPV06} algorithm follows the chain rule to sample a vector $(x_1,\dots,x_r)\in [n]^r$ with successive conditional densities 
      \begin{equation*}
        p\left(x_{\ell+1} = i\vert x_1=i_1,\dots,x_\ell=i_\ell\right) \propto \bK_{ii} - \bK_{i,I_\ell}\bK_{I_\ell}^{-1}\bK_{I_\ell,i},
      \end{equation*}
      where $I_{\ell}=\{i_1,\dots,i_\ell\}$. 
      Forgetting order, $\{x_1,\dots,x_r\}$ are a draw from the DPP \citep[Proposition 19]{HKPV06}, see also \citet[Theorem 2.3]{KuTa12} for a detailed treatment of DPPs on $[n]$.

      While exact, this algorithm runs in $\cO(nr^3)$ operations and requires computing  and storing the $n\times n$ matrix $\bK$. 
      Storage can be diminished if one has access to $\bA$ in \eqref{e:projectionKernel}, through QR decomposition of $\bA^{\top}$. 
      Still, depending on $n$ and $r$, sampling can become intractable. 
      This has sparked interest in fast approximate sampling methods for DPPs, which we survey in Section~\ref{s:approximate}.

      Interestingly, there exist \emph{fast} and \emph{exact} methods for sampling some specific DPPs, which are not based on the approach of \citet{HKPV06}. 
      We introduced the DPP behind uniform spanning trees on a connected graph $G$ in Section~\ref{s:lyons}.
      Random walk algorithms such as the ones by \citet{Ald90}, \citet{Bro89}, and \citet{PrWi98JoA} sample uniform spanning trees in time bounded by the cover time of the graph, for instance, which is $\cO(r^3)$ and can be $o(r^3)$ \citep{LePeWi09}, where $G$ has $r+1$ vertices. 
      This compares favorably with the algorithm of \citet{HKPV06} above, since each sample contains $r$ edges. The Aldous-Broder algorithm, for instance, starts from an empty set $\cT=\emptyset$ and an arbitrary node $x_0$, and samples a simple random walk $(X_t)_{t\in\mathbb{N}}$ on the edges of $G$, starting from $X_0=x_0$, and adding edge $[X_t,X_{t+1}]$ to $\cT$ the first time it visits vertex $X_{t+1}$. 
      The algorithm stops when each vertex has been seen at least once, that is, at the cover time of the graph.  

    \subsection{Approximate Sampling of Projection DPPs}
    \label{s:approximate}
      There are two main sets of methods for approximate sampling from general DPPs. 
      The first set uses the general-purpose tools from numerical algebra and the other is based on MCMC sampling.

      Consider $\bK = \bC^{\top} \bC$ with $\bC$ of size $d \times n$, for some $d\ll n$ \citep{KuTa11}, but still too large for exact sampling using the method of \citet{HKPV06}, then \citet{GiKuTa12} show how projecting $\bC$ can give an approximation with bounded error. 
      When this decomposition of the kernel is not possible, \citet{AKFT13} adapt \nystrom sampling \citep{WiSe01} to DPPs
      and bound the approximation error for DPPs and $k$-DPPs, which thus applies to projection DPPs.

      Apart from general purpose approximate solvers, there exist MCMC-based methods for approximate sampling from projection DPPs. 
      In Section~\ref{s:lyons}, we introduced the \textit{basis-exchange} property, which implies that once we remove an element from a basis $B_1$ of a linear matroid, any other basis $B_2$ has an element we can take and add to $B_1$ to make it a basis again. 
      This means we can construct a connected graph $G_{\text{be}}$ with $\cB$ as vertex set, and we add an edge between two bases if their symmetric difference has cardinality $2$. 
      $G_{\text{be}}$ is called the \textit{basis-exchange graph}. \citet{FeMi92} show that the simple random walk on $G_{\text{be}}$ has limiting distribution the uniform distribution on $\cB$ and mixes fast, under conditions that are satisfied by the matroids involved by DPPs. 
    
      If the uniform distribution on $\cB$ is not the DPP we want to sample from,\footnote{It may not even \emph{be} a DPP \citep[Corollary 5.5]{Lyo03}.} we can add an accept-reject step after each move to make the desired DPP the limiting distribution of the walk. 
      Adding such an acceptance step and a probability to stay at the current basis, \citet{AnGhRe16,LiJeSr16} give precise polynomial bounds on the mixing time of the resulting Markov chains. 
      This Markov kernel on $\cB$ is given in Algorithm~\ref{alg:basisExchange}. 
      Note that we use the acceptance ratio of \citet{LiJeSr16}.
      In the following, we make use of the notation $\Vol$ defined as follows.
      For any $P\subset [n]$,
      \begin{equation}
        \Vol^2(\bA_{:P}) \triangleq \det {\bA^{\top}}_{P:} \bA_{:P} \propto \det \bK_P,
      \end{equation}
      which corresponds to the squared volume of the parallelotope spanned by the columns of $\bA$ indexed by $P$.
      In particular, for subsets $P$ such that $|P|>r$ or such that $|P|=r$, $P\notin \cB$ we have $\Vol^2(\bA_{:P})=0$.
      However, for $B \in \cB$, $\Vol^2(\bB) = |\det \bA_{:B}|^2 > 0$. 
       
      We now turn to our contribution, which finds its place in this category of MCMC-based approximate
      DPP samplers.

      \begin{algorithm}[tb]
          \caption{\texttt{basisExchangeSampler}}
          \label{alg:basisExchange}
          \begin{algorithmic}
             \STATE {\bfseries Input:} Either $\bA$ or $\mathbf{K}$
             \STATE Initialize $i \leftarrow 0$ and pick $B_0 \in \cB$ as defined in \eqref{e:collection_of_bases-cB} 
             \WHILE{Not converged}
              \STATE Draw $ u \sim \Unif_{[0,1]}$
                  \IF{$u < \frac12$} 
                      \STATE Draw $s \sim \Unif_{B_i}$ and $t \sim \Unif_{[n]\setminus B_i}$
                      \STATE $P \leftarrow \lrp{B_{i}\setminus \{s\}} \cup \{t\}$
                      \STATE Draw $ u' \sim \Unif_{[0,1]}$
                          \IF{$u' < \frac{\Vol^2(\bA_{:P})}{\Vol^2(\bB_i) + \Vol^2(\bA_{:P})}
                              = \frac{\det \bK_{P}}{\det \bK_{B_i} + \det \bK_{P}}$}
                              \STATE $B_{i+1} \leftarrow P$
                          \ELSE
                              \STATE $B_{i+1} \leftarrow B_{i}$
                          \ENDIF
                  \ELSE
                      \STATE $B_{i+1} \leftarrow B_{i}$
                  \ENDIF
               \STATE $i\leftarrow i+1$
             \ENDWHILE
          \end{algorithmic}
      \end{algorithm}

  \section{Hit-and-run on Zonotopes}
  \label{s:zonotopes}

    Our main contribution is the construction of a fast-mixing Markov chain with limiting
    distribution a given projection DPP. Importantly, we assume to know $\bA$ in \eqref{e:projectionKernel}.
    \begin{assumption}
    \label{a:range}
      We know a full-rank $r\times n$ matrix $\bA$ such that $\bK = \bA^{\top}(\bA \bA^\top )^{-1} \bA$.
    \end{assumption}
    As discussed in Section~\ref{s:kdpps}, this is not an overly restrictive assumption, as many
    ML applications start with building the feature matrix $\bA$ rather than the similarity matrix $\bK$. 

    \subsection{Zonotopes}
    \label{ss:zonotopes}

      We define the \emph{zonotope} $\Zon(\bA)$ of $\bA$ as the $r$-dimensional volume spanned by the column vectors of $\bA$,
      \begin{equation}
      \label{e:Def_Zonotope}
        \Zon(\bA) = \bA [0,1]^n. 
      \end{equation}
      As an affine transformation of the unit hypercube, $\Zon(\bA)$ is a $r$-dimensional polytope.
      In particular, for a basis $B\in\cB$ of the matroid $M[\bA]$, the corresponding $\Zon(\bB)$ is a $r$-dimensional parallelotope with volume $\Vol(\bB) = \lrabs{\det\bB}$, see Figure~\ref{f:zonotope}.
      On the contrary, any $P\subset [n]$, such that $|P|=r, P\notin\cB$ also yields a parallelotope $\Zon(\bA_{:P})$, but its volume is null. 
      In the latter case, the exchange move in Algorithm~\ref{alg:basisExchange} will never be accepted and the state space of the corresponding Markov chain is indeed $\cB$.

      \setlength{\fboxsep}{0pt}
    	\begin{figure*}[!ht]
    		\subfigure[]{
    			\includegraphics[scale=0.21]{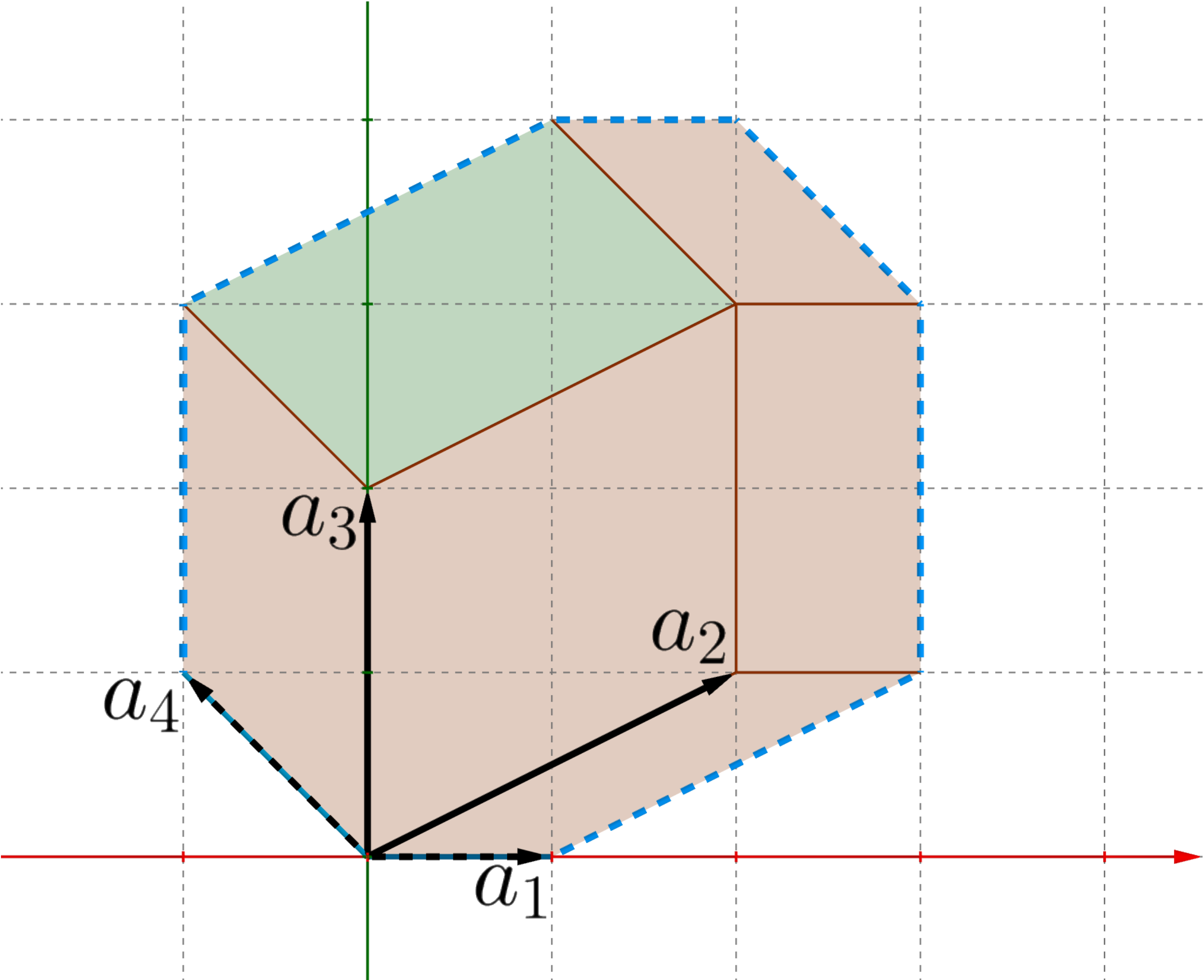}
    			\label{f:zonotope}
    			}
    		\subfigure[]{
    			\includegraphics[scale=0.24]{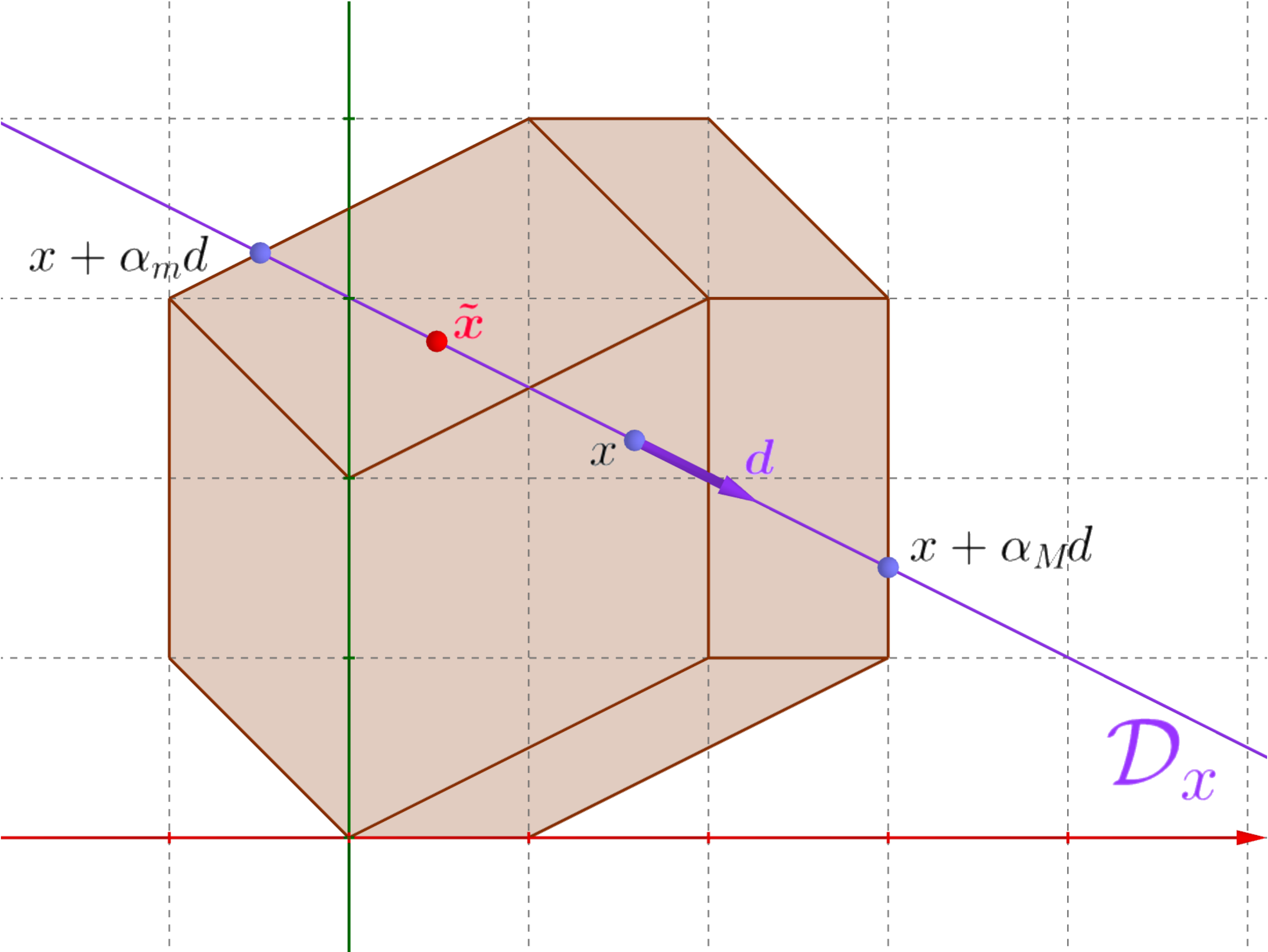}
    			\label{f:hitAndRun}
    			}
    		\subfigure[]{
    			\includegraphics[scale=0.21]{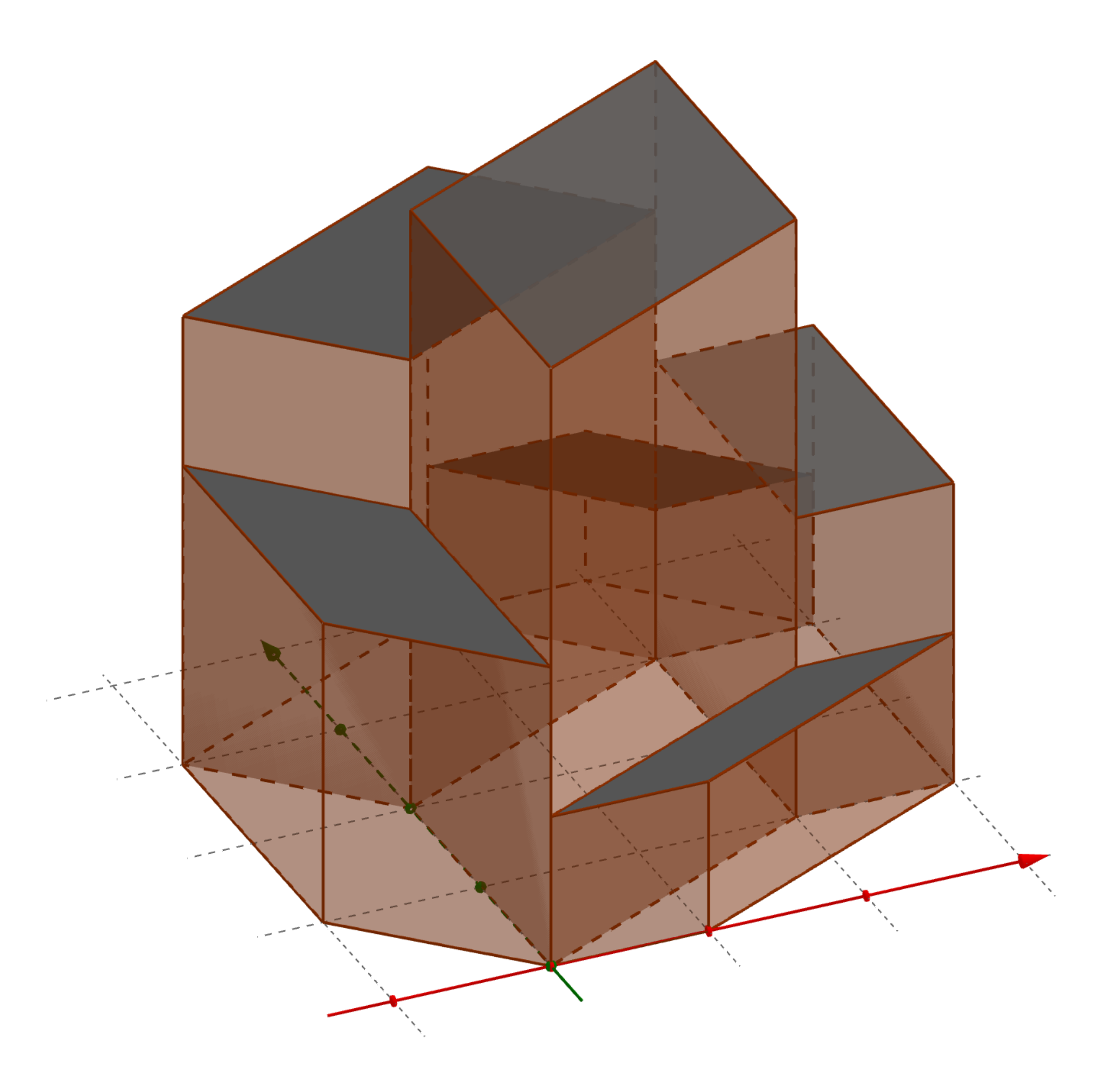}
    			\label{f:skyscrapers}
    			}
    		\caption{
    		(a) The dashed blue lines define the contour of $\Zon(\bA)$ where
    			$\bA=\lrp{
    			\begin{smallmatrix}
    			1 &2 &0 &-1\\
    			0 &1 &2 &1
    			\end{smallmatrix}}$. 
    			Each pair of column vectors corresponds to a parallelogram, the green one is associated to $\Zon(\mathbf{B})$ with $B=\lrcb{2,4}$. 
    		(b) A step of hit-and-run on the same zonotope. 
    		(c) Representation of $\pi_v$ for the same zonotope.}
    	\end{figure*}

      Our algorithm relies on the proof of the following.
      \begin{proposition}[see \citealp{DyFr94} for details]
      \label{p:dyer}
        \begin{equation}
        \label{e:Vol_Zonotope}
          \Vol(\Zon(\bA)) 
            = \sum_{B \in \cB} \Vol(\bB)
            = \sum_{B \in \cB} \lrabs{\det\bB}
        \end{equation}
      \end{proposition}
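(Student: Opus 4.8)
The plan is to prove, by induction on the number of columns, the slightly more general identity that for \emph{every} real $r\times n$ matrix $\bA$ (not assumed full rank), with columns $(a_j)_{j\in[n]}$,
\begin{equation*}
  \Vol(\Zon(\bA)) = \sum_{B\in\cB} \Vol(\bA_{:B}) = \sum_{B\in\cB}\lrabs{\det \bA_{:B}},
\end{equation*}
where $\cB=\cB(\bA)$ is the set of $B\subseteq[n]$ with $\lrabs{B}=r$ and $\lrcb{a_j}_{j\in B}$ linearly independent; Proposition~\ref{p:dyer} is then the full-rank case. Both sides vanish when $\rank\bA<r$, since $\Zon(\bA)$ then lies in a proper subspace and $\cB=\emptyset$, so that case (in particular $n<r$) is free. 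The base case $n=r$ is immediate: $\Zon(\bA)=\bA[0,1]^r$ is a (possibly degenerate) parallelotope of $r$-volume $\lrabs{\det\bA}$ and $\cB\subseteq\lrcb{[r]}$, with $[r]\in\cB$ exactly when $\det\bA\neq 0$.

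For the inductive step I would peel off the last column: set $c=a_n$ and let $\bA'$ be the $r\times(n-1)$ matrix of the remaining columns, so $\Zon(\bA)=\Zon(\bA')+[0,c]$, a Minkowski sum of a convex body with a segment. If $c=0$ nothing changes, so assume $c\neq 0$, let $\pi$ be the orthogonal projection of $\R^r$ onto the hyperplane $c^{\perp}$, fix an orthonormal basis of $c^{\perp}$ so that $\pi\bA'$ is a genuine $(r-1)\times(n-1)$ matrix, and sweep along direction $c$: the two bodies have the same shadow $\pi(\Zon(\bA'))=\Zon(\pi\bA')$, and every nonempty fibre of $\Zon(\bA')$ along direction $c$ is an interval that is lengthened by exactly $\lrnorm{c}$ in $\Zon(\bA)$, so Fubini gives
\begin{equation*}
  \Vol(\Zon(\bA)) = \Vol(\Zon(\bA')) + \lrnorm{c}\,\Vol\lrp{\Zon(\pi\bA')}.
\end{equation*}
Since $\pi\bA'$ has only $n-1$ columns, the induction hypothesis applies to it, and it applies to $\bA'$ for the same reason.

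It remains to reconcile the two resulting sums with the partition of $\cB(\bA)$ according to whether $n\in B$. Bases of $\bA$ avoiding $n$ are precisely the bases of $\bA'$, with the same submatrix, so the first term contributes $\sum_{B\in\cB(\bA),\,n\notin B}\lrabs{\det\bA_{:B}}$. For the second term I would use two elementary facts. First, for $B'\subseteq[n-1]$ with $\lrabs{B'}=r-1$, the vectors $\lrcb{\pi a_j}_{j\in B'}$ are independent in $c^{\perp}$ if and only if $\lrcb{a_j}_{j\in B'}\cup\lrcb{c}$ are independent in $\R^r$, i.e.\ $B'\cup\lrcb{n}\in\cB(\bA)$; thus $B'\mapsto B'\cup\lrcb{n}$ is a bijection between the index sets of the two sums. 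Second, the base-times-height identity $\lrnorm{c}\cdot\lrabs{\det(\pi\bA')_{:B'}}=\lrabs{\det \bA_{:B'\cup\{n\}}}$: subtracting the appropriate multiple of $c$ from each column $a_j$, $j\in B'$ — a volume-preserving column operation — turns $a_j$ into $\pi a_j$, and the parallelotope spanned by $\lrcb{\pi a_j}_{j\in B'}\cup\lrcb{c}$ is a right prism, so its $r$-volume is the $(r-1)$-volume of its base in $c^{\perp}$ times the height $\lrnorm{c}$. Hence the second term equals $\sum_{B\in\cB(\bA),\,n\in B}\lrabs{\det\bA_{:B}}$, and adding the two sums closes the induction.

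I expect the one genuinely delicate point to be the sweeping step: one must verify carefully that $\Zon(\bA)$ is exactly the union of the translates $\Zon(\bA')+tc$ over $t\in[0,1]$, that its fibres along direction $c$ are precisely the fibres of $\Zon(\bA')$ lengthened by $\lrnorm{c}$, and that the two bodies share the shadow $\pi(\Zon(\bA'))$ — all consequences of convexity and linearity of $\pi$, but worth spelling out. Everything else is routine linear algebra. This induction is also the analytic counterpart of the zonotopal tiling the sampler exploits: it says that sweeping the lower-dimensional zonotope along $c$ cuts $\Zon(\bA)$ into the old tiles together with, for each basis $B'$ of $\pi\bA'$, one new tile congruent to the parallelotope $\Zon(\bA_{:B'\cup\{n\}})$.
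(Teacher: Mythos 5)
Your proof is correct, but it is a genuinely different argument from the paper's. You prove the volume identity by induction on the number of generators, using the Minkowski-sum structure $\Zon(\bA)=\Zon(\bA')+[0,a_n]$ and the sweep formula $\Vol(K+[0,c])=\Vol(K)+\lrnorm{c}\Vol_{r-1}(\pi_c K)$, together with the base-times-height identity to match terms; this is the classical McMullen/Shephard proof of the zonotope volume formula. The paper instead follows \citet{DyFr94}: fix a generic cost vector $c\in\R^n$ (note the clash of notation --- their $c$ is a cost vector in $\R^n$, yours is a column of $\bA$ in $\R^r$), and use the unique optimal solution of the LP $P_x(\bA,c)$ to assign to each $x\in\Zon(\bA)$ a basis $B_x$, which tiles the zonotope into parallelotopes $\lrcb{\xi_B+\Zon(\bB)}_{B\in\cB}$ with disjoint interiors; the identity follows by summing tile volumes. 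Your route is more elementary and self-contained (no LP theory, no genericity argument, and it handles rank-deficient $\bA$ uniformly), and your closing remark correctly observes that the induction implicitly builds a "sweep" tiling. What the LP proof buys, and why the paper cannot dispense with it, is the explicit, computable map $x\mapsto B_x$: Remark~\ref{r:volumeSampling} and Algorithms~\ref{alg:extractBasis}--\ref{alg:zonoSampling} rely on being able to evaluate this map by solving one LP per sample, and on the fact that its fibres are exactly the tiles $\Zon(\bB)$ up to translation, so that uniform (resp.\ reweighted) sampling on $\Zon(\bA)$ pushes forward to volume (resp.\ squared-volume) sampling on $\cB$. For the proposition as stated, your argument is a complete and valid substitute; for the algorithm built on its proof, it is not.
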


      \begin{proof}
        In short, for a good choice of $c\in\R^n$, \citet{DyFr94}
        consider for any $x \in \Zon(\bA)$, the following linear program (LP) noted $P_x(\bA,c)$,
        \begin{equation}
        \label{e:LP_Px}
            \begin{array}{cl}
              \min\limits_{y\in \R^n} & c^{\top} y        \\
                          \text{s.t.} & \bA y = x         \\
                                      & 0 \leq y \leq 1.
            \end{array}
        \end{equation}
        Standard LP results \citep{LuYe08} yield that the unique optimal solution $y^*$ of $P_x(\bA,c)$ takes the form
        \begin{equation}
        \label{e:sol_Px_Decomposition}
          y^* = \bA\xi(x) + \bB_x u,
        \end{equation}
        with $u \in [0,1]^r$ and $\xi(x) \in \{0,1\}^n$ such that $\xi(x)_i = 0$ for $i\in B_x$. 
        In case the choice of $B_x$ is ambiguous, \citet{DyFr94} take the smallest in the lexicographic order. 
        Decomposition~\eqref{e:sol_Px_Decomposition} allows locating any point $x\in \Zon(\bA)$ as falling inside a uniquely defined parallelotope $\Zon(\bB_x)$ shifted by $\xi(x)$. 
        Manipulating the optimality conditions of \eqref{e:LP_Px}, \citet{DyFr94} prove that each basis $B$ can be realized as a $B_x$ for some $x$, and that $x'\in\Zon(\bB_x)\Rightarrow B_x=B_{x'}$. 
        This allows to write $\Zon(\bA)$ as the tiling of all $\Zon(\bB)$, $B\in\cB$, with disjoint interiors. 
        This leads to Proposition~\ref{p:dyer}. 
      \end{proof}
      Note that $c$ is used to fix the tiling of the zonotope, but the map $x\mapsto B_x$ depends on this linear objective. Therefore, the tiling of $\Zon(\bA)$ is may not be unique. 
      An arbitrary~$c$ gives a valid tiling, as long as there are no ties when solving \eqref{e:LP_Px}. 
      \citet{DyFr94} use a nonlinear mathematical trick to fix $c$. In practice (Section \ref{s:USTs}), we generate a random Gaussian $c$ once and for all, which makes sure no ties appear during the execution, with probability 1.

      \begin{remark}
      \label{r:volumeSampling} 
        We propose to interpret the proof of Proposition~\ref{p:dyer} as a volume sampling algorithm: 
        if one manages to sample an $x$ uniformly on $\Zon(\bA)$, and then extracts the corresponding basis $B=B_x$ by
        solving \eqref{e:LP_Px}, then $B$ is drawn with probability proportional to $\Vol(\bB) = |\det\bB|$.
      \end{remark} 

      Remark~\ref{r:volumeSampling} is close to what we want, as sampling from a projection DPP under Assumption~\ref{a:range} boils down to sampling a basis $B$ of $M[\bA]$  proportionally to the \emph{squared} volume $|\det\bB|^2$  (Section~\ref{s:lyons}). 
      In the rest of this section, we explain how to efficiently sample $x$ uniformly on $\Zon(\bA)$, and how to change the volume into its square.

    \subsection{Hit-and-run and the Simplex Algorithm}
    \label{s:hitAndRun}

      $\Zon(\bA)$ is a convex set. 
      Approximate uniform sampling on large-dimensional convex bodies is one of the core questions in MCMC, see e.g., \citet{CoVe16} and references therein. 
      The hit-and-run Markov chain \citep{Tur71,Smi84} is one of the preferred practical and theoretical solutions \citep{CoVe16}.

      We describe the Markov kernel $P(x,z)$ of the hit-and-run Markov chain for a generic target distribution $\pi$ supported on a convex set $C$. 
      Sample a point $y$ uniformly on the unit sphere centered at $x$. 
      Letting $d=y-x$, this defines the line $\cD_x\triangleq \lrcb{x + \alpha d ~;~ \alpha\in \R}$. 
      Then, sample $z$ from any Markov  kernel $Q(x,\cdot)$ supported on $\cD_x$ that leaves the restriction of $\pi$ to $\cD_x$ invariant. 
      In particular, Metropolis-Hastings kernel (MH, \citealt{RoCa04}) is often used with uniform proposal on $\cD_x$, which favors large moves across the support $C$ of the target, see Figure~\ref{f:hitAndRun}. 
      The resulting Markov kernel leaves $\pi$ invariant, see e.g., \citet{AnDi07} for a general proof. 
      Furthermore, the hit-and-run Markov chain has polynomial mixing time for log concave $\pi$ \citep[Theorem 2.1]{LoVe03}. 

      To implement Remark~\ref{r:volumeSampling}, we need to sample from $\pi_{u}\propto \indic_{\Zon(\bA)}$.
      In practice, we can choose the secondary Markov kernel $Q(x,\cdot)$ to be MH with uniform proposal  on~$\cD_x$, as long as we can determine the endpoints $x + \alpha_m (y-x)$ and $x + \alpha_M (y-x)$ of $\cD_x\cap\Zon(\bA)$.
      In fact, zonotopes are tricky convex sets, as even an oracle saying whether a point belongs to the zonotope requires solving LPs (basically, it is Phase I of the simplex algorithm).
      As noted by~\citet[Section 4.4]{LoVe03}, hit-and-run with LP is the state-of-the-art for computing the volume of large-scale zonotopes. 
      Thus, by definition of $\Zon(\bA)$, this amounts to solving two more LPs:
      $\alpha_m$ is the optimal solution to the linear program
      \begin{equation}
      \label{e:LP_alpha}
        \begin{array}{cl}
          \min \limits_{\lambda \in \R^n, \alpha\in \R} & \alpha          \\
                            \text{s.t.} & x + \alpha d = \bA \lambda  \\
                                  & 0 \leq \lambda \leq 1,
        \end{array}
      \end{equation}
      while $\alpha_M$ is the optimal solution of the same linear program with objective $-\alpha$. 
      Thus, a combination of hit-and-run and LP solvers such as Dantzig's simplex algorithm \citep{LuYe08} yields a Markov kernel with invariant distribution $\indic_{\Zon(\bA)}$, summarized in Algorithm~\ref{alg:unifZonoHitRun}.
      The acceptance in MH is $1$ due to our choice of the proposal and the target. 
      By the proof of Proposition~\ref{p:dyer}, running Algorithm~\ref{alg:unifZonoHitRun}, taking the output chain $(x_i)$ and extracting the bases $(B_{x_i})$ with Algorithm~\ref{alg:extractBasis}, we obtain a chain on $\cB$ with invariant distribution proportional to the volume of $\bB$.

      \begin{algorithm}[tb]
        \caption{\texttt{unifZonoHitAndRun}}
        \label{alg:unifZonoHitRun}
        \begin{algorithmic}
          \STATE {\bfseries Input:} $\bA$
          \STATE {\bfseries Initialization:} 
          \STATE $i \leftarrow 0$
          \STATE $x_0 \leftarrow \bA u$ with $u\sim \Unif_{[0,1]^n}$
          \WHILE{Not converged}
            \STATE Draw $d\sim\Unif_{\mathbb{S}^{r-1}}$ and let $\cD_{x_{i}} \triangleq x_i + \R d$
            \STATE Draw $\tilde{x} \sim \Unif_{\cD_{x_i} \cap \Zon(A)}$
            \quad\textcolor{magenta}{\#Solve 2 LPs, see \eqref{e:LP_alpha} }
            \STATE $x_{i+1} \leftarrow \tilde{x}$
            \STATE $i\leftarrow i+1$
          \ENDWHILE
        \end{algorithmic}
      \end{algorithm}

      \begin{algorithm}[tb]
        \caption{\texttt{extractBasis}}
        \label{alg:extractBasis}
        \begin{algorithmic}
          \STATE {\bfseries Input:} $\bA, c, x\in \Zon(\bA)$
          \STATE Compute $y^*$ the opt.\,solution of $P_x(\bA,c)$
              \ \textcolor{magenta}{\#1 LP, see \eqref{e:LP_Px}}
          \STATE $B \leftarrow \lrcb{i ~; y^*_i \in ]0,1[}$
          \STATE \textbf{return} $B$
        \end{algorithmic}
      \end{algorithm}

      In terms of theoretical performance, this Markov chain inherits \citepos{LoVe03} mixing time as it is a simple transformation of hit-and-run targeting the uniform distribution on a convex set.
      We underline that this is not a pathological case and it already covers a range of applications, as changing the feature matrix $\bA$ yields another zonotope, but the target distribution on the zonotope stays uniform. 
      Machine learning practitioners do not use volume sampling for diversity sampling yet, but nothing prevents it, as it already encodes the same feature-based diversity as squared volume sampling (i.e., DPPs).
      Nevertheless, our initial goal was to sample from a projection DPP with kernel $\bK$ under Assumption~\ref{a:range}. 
      We now modify the Markov chain just constructed to achieve that.

    \subsection{From Volume to Squared Volume}
    \label{s:skyscrapers}

      Consider the probability density function on $\Zon(\bA)$
      \begin{equation*}
        \pi_v(x) = \frac{\lrabs{\det \bB_x}}{\det \bA\bA^{\top}} \indic_{\Zon(\bA)}(x),
      \end{equation*}
      represented on our example in Figure~\ref{f:skyscrapers}. 
      Observe, in particular, that $\pi_v$ is constant on each $\Zon{(\bB)}$.
      Running the hit-and-run algorithm with this target instead of $\pi_u$ in Section~\ref{s:hitAndRun}, and extracting bases using Algorithm~\ref{alg:extractBasis} again, we obtain a Markov chain on $\cB$ with limiting distribution $\nu(B)$ proportional to the squared volume spanned by column vectors of $\bB$, as required. 
      To see this, note that $\nu(B)$ is the volume of the ``skyscraper'' built on top of $\Zon(\bB)$ in Figure~\ref{f:skyscrapers}, that is $\Vol(\bB)\times \Vol(\bB)$.

      The resulting algorithm is shown in Algorithm~\ref{alg:zonoSampling}.
      Note the acceptance ratio in the subroutine Algorithm~\ref{alg:volZonoHitRun} compared to Algorithm~\ref{alg:unifZonoHitRun}, since the target of the hit-and-run algorithm is not uniform anymore.

      \begin{algorithm}[tb]
        \caption{\texttt{volZonoHitAndRun}}
        \label{alg:volZonoHitRun}
        \begin{algorithmic}
          \STATE {\bfseries Input:} $ \bA, c, x, B$
          \STATE Draw $d\sim\Unif_{\mathbb{S}^{r-1}}$ and let $\cD_{x} \triangleq x + \R d$
          \STATE Draw $\tilde{x} \sim \Unif_{\cD_x \cap \Zon(\bA)}$ 
            \quad\textcolor{magenta}{\#Solve 2 LPs, see \eqref{e:LP_alpha}}
          \STATE $\tilde{B} \leftarrow $ \texttt{extractBasis($\bA, c, \tilde{x}$)}
            \quad\textcolor{magenta}{\#Solve 1 LP, see \eqref{e:LP_Px}}
          \STATE Draw $u\sim \Unif_{[0,1]}$
              \IF{
              $u < \frac{\Vol(\tilde{\bB})}{\Vol(\bB)} = \lrabs{\frac{\det \bA_{:\tilde{B}}}{\det \bA_{:B}}}$
              }  
                \STATE \textbf{return} $\tilde{x}, \tilde{B}$
            \ELSE
              \STATE \textbf{return} $x, B$
              \ENDIF
        \end{algorithmic}
            \end{algorithm}
      \begin{algorithm}[tb]
        \caption{\texttt{zonotopeSampler}}
        \label{alg:zonoSampling}
        \begin{algorithmic}
          \STATE {\bfseries Input:} $\bA, c$
          \STATE {\bfseries Initialization:}
          \STATE $i \leftarrow 0$
          \STATE $x_i \leftarrow \bA u$, with $u\sim \Unif_{[0,1]^n}$
          \STATE $B_i \leftarrow $ \texttt{extractBasis($\bA, c, x_i$)}
          \WHILE{Not converged}
            \STATE $x_{i+1}, B_{i+1} \leftarrow $ \texttt{volZonoHitAndRun($\bA, c, x_i, B_i$)}
            \STATE $i\leftarrow i+1$
          \ENDWHILE
        \end{algorithmic}
      \end{algorithm}

    \subsection{On Base Measures}
    \label{s:baseMeasure}
      As described in Section~\ref{s:kdpps}, it is common in ML to specify a marginal relevance $q_i$ of each item $i\in [n]$, i.e., the \emph{base measure} of the DPP. 
      Compared to a uniform base measure, this means replacing $\bA$ by $\tilde{\bA}$ with columns $\tilde{a_i}=\sqrt{q_i}a_i$. 
      Contrary to $\bA$, in Algorithm~\ref{alg:volZonoHitRun}, both the zonotope and the acceptance ratio are scaled by the corresponding products of $\sqrt{q_i}$s.
      We could equally well define $\tilde{\bA}$ by multiplying each column of $\bA$ by $q_i$ instead of its square root, and leave the acceptance ratio in Algorithm~\ref{alg:volZonoHitRun} use columns of the original $\bA$. By the arguments in Section~\ref{s:skyscrapers}, the chain $(B_i)$ would leave the same projection DPP invariant.
      In particular, we have some freedom in how to introduce the marginal relevance $q_i$, so we can choose the latter solution that simply scales the zonotope and its tiles to preserve outer angles, while using unscaled volumes to decide acceptance.
      This way, we do not create harder-to-escape or sharper corners for hit-and-run, which could lead the algorithm to be stuck for a while \citep[Section 4.2.1]{CoVe16}.
      Finally, since hit-and-run is efficient at moving across convex bodies \citep{LoVe03}, the rationale is that if hit-and-run was empirically mixing fast before scaling, its performance should not decrease.

  \section{Experiments}
  \label{s:experiments}

    We investigate the behavior of our Algorithm \ref{alg:zonoSampling} on
    synthetic graphs in Section~\ref{s:USTs}, in summary extraction in Section~\ref{s:text_summarization}, and on MNIST in Appendix~\ref{s:app_mnist}.
  \vspace{-0.5em}
    \subsection{Non-uniform Spanning Trees}
    \label{s:USTs}
      We compare Algorithm~\ref{alg:basisExchange} studied by \citet{AnGhRe16,LiJeSr16} and our Algorithm~\ref{alg:zonoSampling} on two types of graphs, in two different settings.
      The graphs we consider are the complete graph $K_{10}$ with 10 vertices (and 45 edges) and a realization $\text{BA}(20,2)$ of a Barab\'{a}si-Albert graph with 20 vertices and parameter 2. 
      We chose BA as an example of structured graph, as it has the preferential attachment property present in social networks \citep{BaAl99}. 
      The input matrix $\bA$ is a weighted version of the vertex-edge incidence matrix of each graph for which we keep only the~9 (resp.\,19) first rows, so that it satisfies Assumption~\ref{a:range}. 
      For more generality, we introduce a base measure, as described in Section~\ref{s:kdpps} and \ref{s:baseMeasure}, by reweighting the columns  of $\bA$ with i.i.d.\,uniform variables in $[0,1]$. 
      Samples from the corresponding projection DPP are thus spanning trees drawn proportionally to the products of their edge weights.

      For Algorithm~\ref{alg:zonoSampling}, a value of the linear objective $c$ is drawn once and for all, for each graph, from a standard Gaussian distribution.
      This is enough to make sure no ties appear during the execution, as mentioned in Section \ref{ss:zonotopes}. 
      This linear objective is kept fixed throughout the experiments so that the tiling of the zonotope remains
      the same.
      We run both algorithms for 70 seconds, which corresponds to roughly 50\,000 iterations of Algorithm~\ref{alg:zonoSampling}.
      Moreover, we run 100 chains in parallel for each of the two algorithms. 
      For each of the 100 repetitions, we initialize the two algorithms with the same random initial basis, obtained by solving \eqref{e:LP_Px} once, with $x=\bA u$ and $u\sim \Unif_{[0,1]^n}$.
      For both graphs, the total number $|\cB|$ of bases is of order $10^8$, so computing total variation distances is impractical. 
      We instead compare Algorithms~\ref{alg:basisExchange} and \ref{alg:zonoSampling} based on the estimation of inclusion probabilities $\Proba{S\subset B}$ for various subsets $S\subset [n]$ of size $3$. 
      We observed similar behaviors across 3-subsets, so we display here the typical behavior on a 3-subset.

      The inclusion probabilities are estimated via a running average of the number of bases containing the subsets $S$. 
      Figures~\ref{f:xp_graph_Kn_weighted_iter_triplet} and \ref{f:xp_graph_BA_weighted_iter_triplet} show the behavior of both algorithms vs.\,MCMC iterations for the complete graph $K_{10}$ and a realization of $\text{BA}(20,2)$, respectively.
      Figures~\ref{f:xp_graph_Kn_weighted_CPU_triplet} and \ref{f:xp_graph_BA_weighted_CPU_triplet} show the behavior of both algorithms vs.\,wall-clock time for the complete graph $K_{10}$ and a realization of $\text{BA}(20,2)$, respectively.
      In these four figures, bold curves correspond to the median of the relative errors, whereas the frontiers of colored regions indicate the first and last deciles of the relative errors.

      In Figures~\ref{f:xp_graph_Kn_weighted_PSRF_triplet} and \ref{f:xp_graph_BA_weighted_PSRF_triplet} we compute the Gelman-Rubin statistic \citep{GeRu92}, also called the potential scale reduction factor (PSRF). 
      We use the PSRF implementation of CODA \citep{codaRpackage} in R, on the 100 binary chains indicating the presence of the typical 3-subset in the current basis.

      \begin{figure*}[!ht]
      \vspace{-1em}
      \centering
          \subfigure[Relative error vs.\,MCMC iterations.]{
          \includegraphics[width=\threefig]{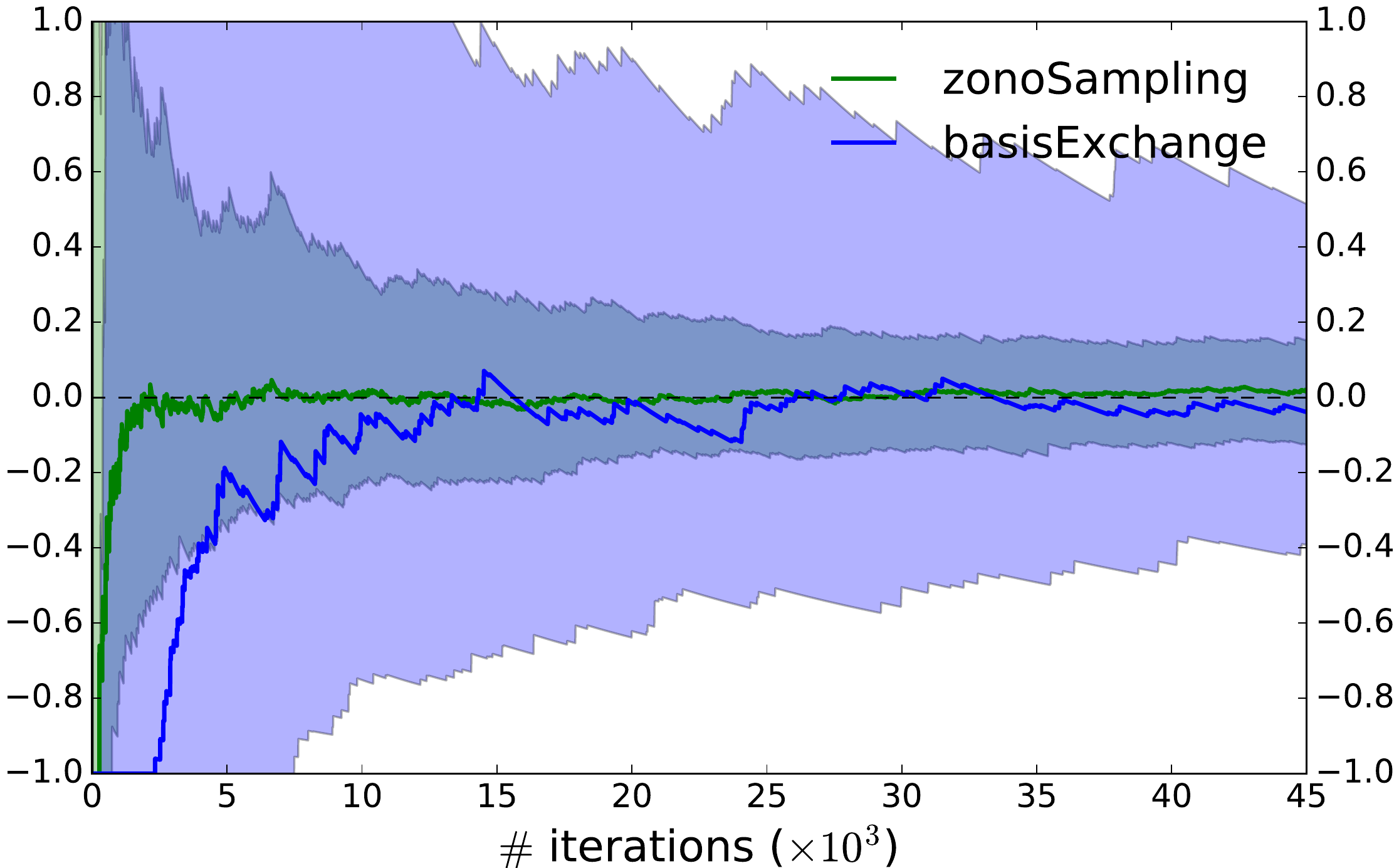}
          \label{f:xp_graph_Kn_weighted_iter_triplet}
          }
          \subfigure[Relative error vs.\,wall-clock time.]{
          \includegraphics[width=\threefig]{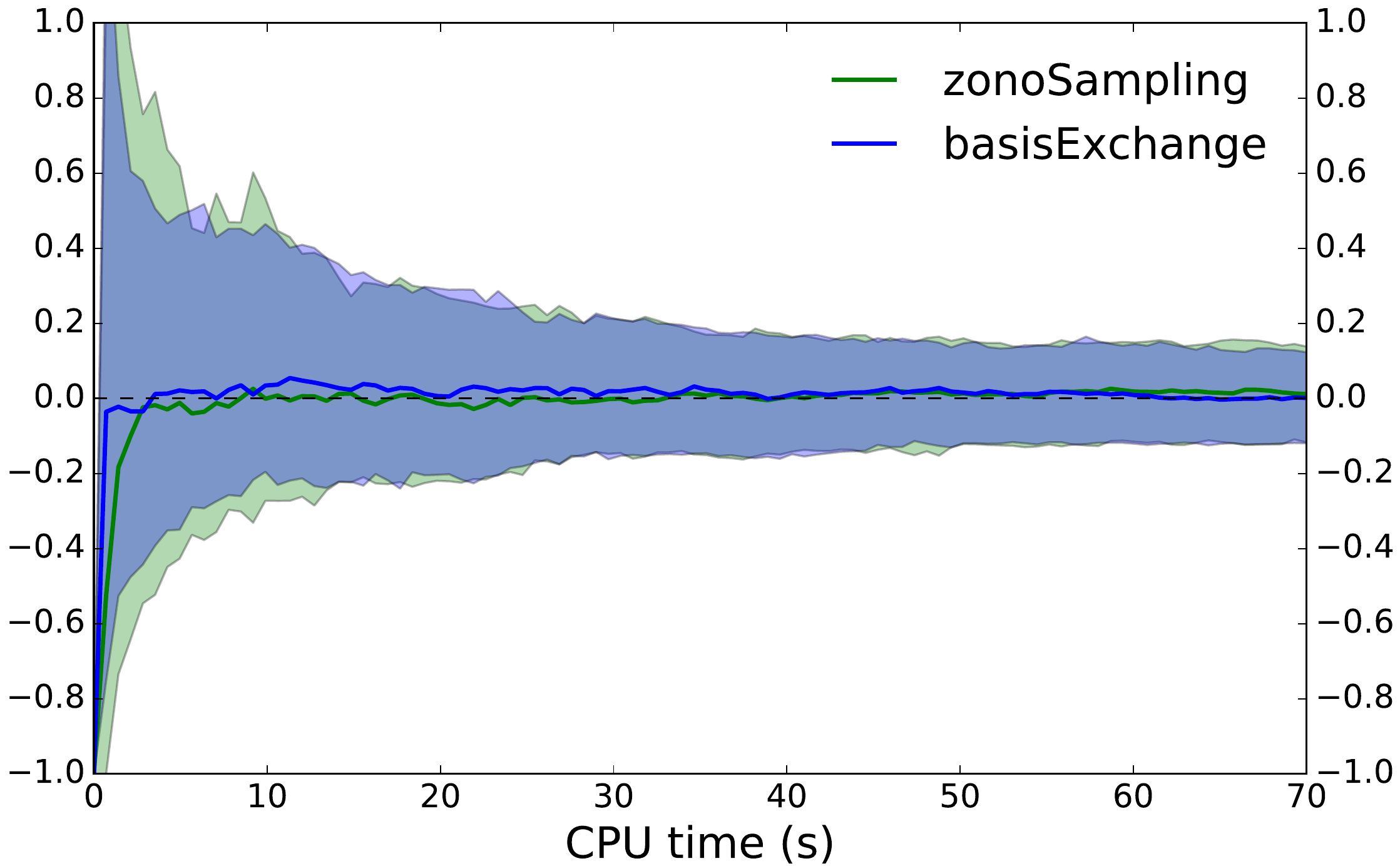}
          \label{f:xp_graph_Kn_weighted_CPU_triplet}
          }
          \subfigure[PSRF vs. MCMC iterations.]{
          \includegraphics[width=\threefig, height=88pt]{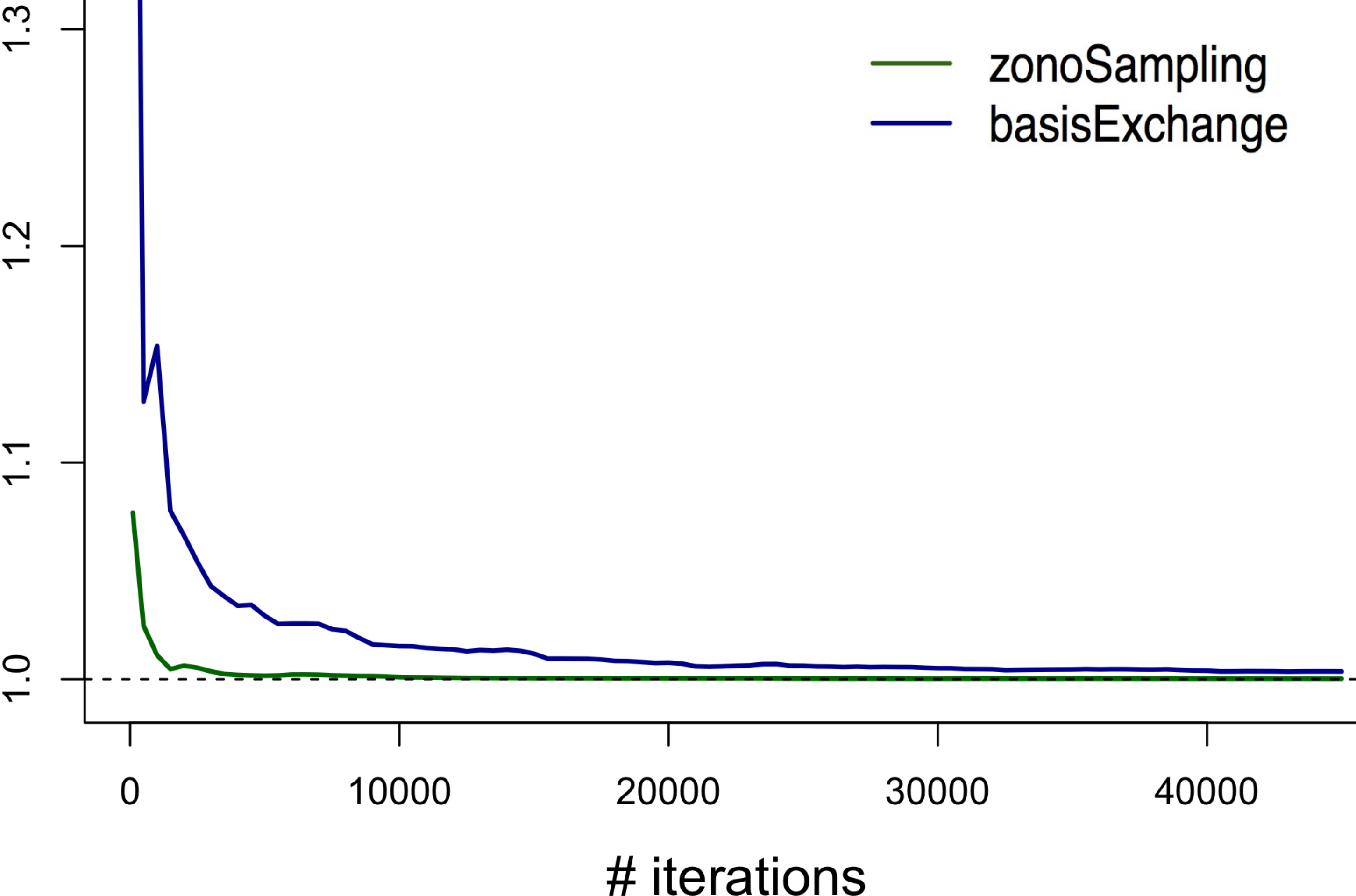}
          \label{f:xp_graph_Kn_weighted_PSRF_triplet}
          }
           \vspace{-1em}
          \caption{Comparison of Algorithms~\ref{alg:basisExchange} and~\ref{alg:zonoSampling} on the complete graph $K_{10}$.}
          \label{f:xp_graph_Kn_weighted_triplet}
      \end{figure*}

      \begin{figure*}[!ht]
          \vspace{-1em}
       \centering
          \subfigure[Relative error vs.\,MCMC iterations.]{
          \includegraphics[width=\threefig]{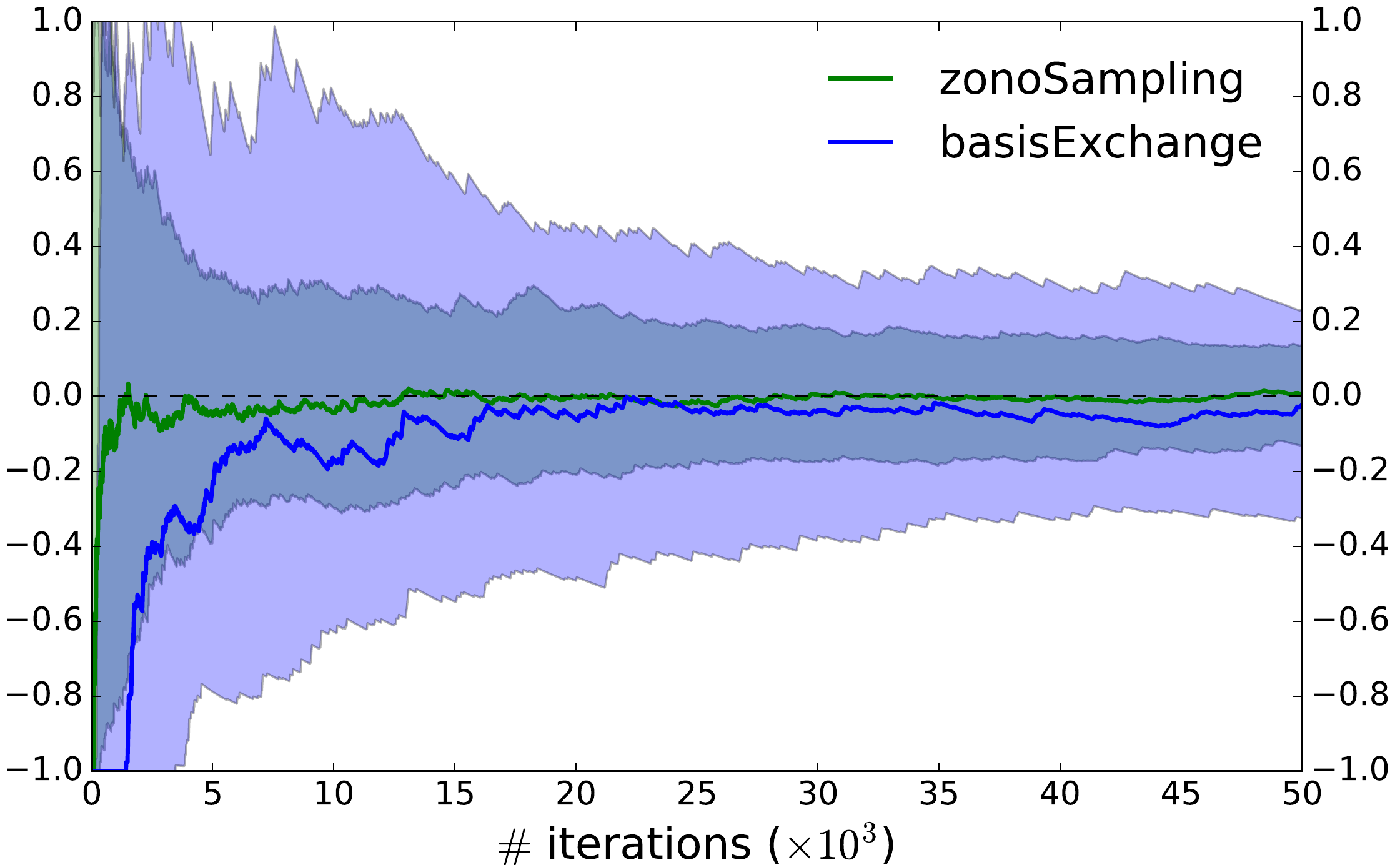}
          \label{f:xp_graph_BA_weighted_iter_triplet}
          }
          \subfigure[Relative error vs.\,wall-clock time.]{
          \includegraphics[width=\threefig]{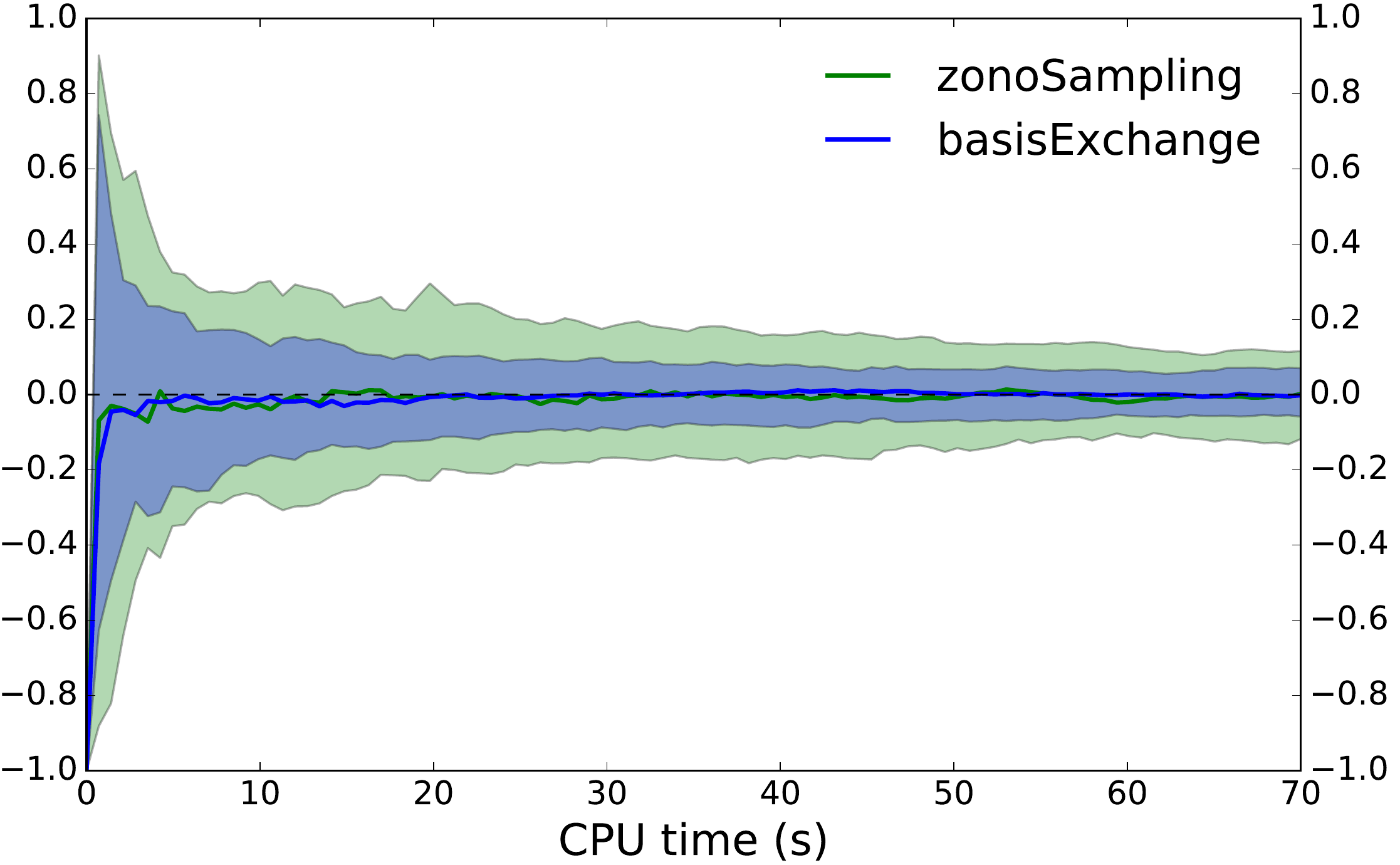}
          \label{f:xp_graph_BA_weighted_CPU_triplet}
          }
          \subfigure[PSRF vs. MCMC iterations.]{
          \includegraphics[width=\threefig, height=88pt]{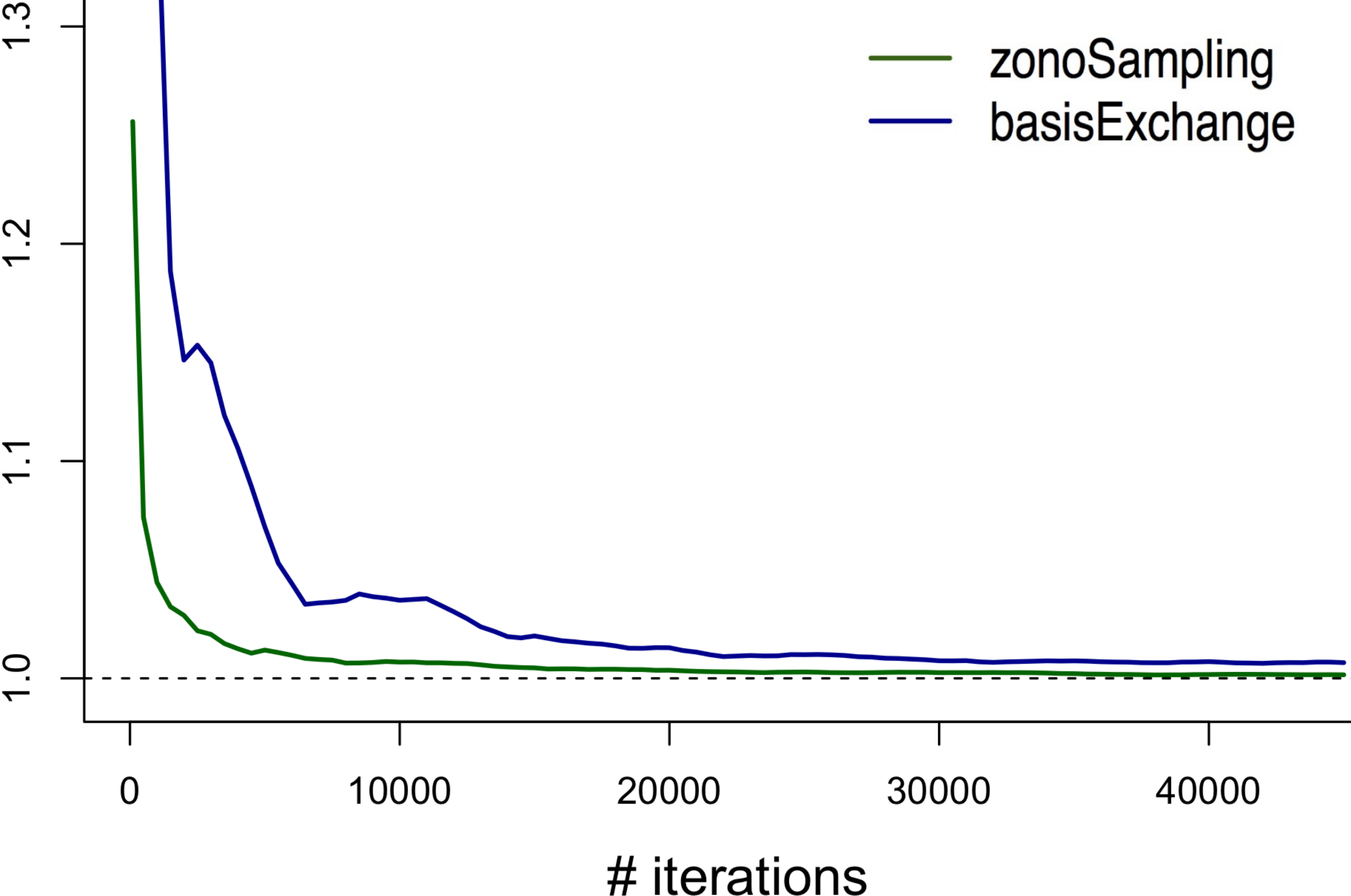}
          \label{f:xp_graph_BA_weighted_PSRF_triplet}
          }
           \vspace{-1em}
          \caption{Comparison of Algorithms~\ref{alg:basisExchange} and~\ref{alg:zonoSampling} on a realization of $\text{BA}(20,2)$.}
          \label{f:xp_graph_BA_weighted_triplet}
      \end{figure*} 

      In terms of number of iterations, our Algorithm~\ref{alg:zonoSampling} clearly mixes faster. 
      Relatedly, we observed typical acceptance rates for our algorithm an order of magnitude larger than Algorithm~\ref{alg:basisExchange}, while simultaneously attempting more global moves than the local basis-exchange moves of Algorithm~\ref{alg:basisExchange}.
      The high acceptance is partly due to the structure of the zonotope: the uniform proposal in the hit-and-run algorithm already favors bases with large determinants, as the length of the intersection of $D_x$ in Algorithm~\ref{alg:volZonoHitRun} with any $\Zon(\bB)$ is an indicator of its volume, see also Figure~\ref{f:hitAndRun}.

      Under the time-horizon constraint, see Figures~\ref{f:xp_graph_Kn_weighted_CPU_triplet} and \ref{f:xp_graph_BA_weighted_CPU_triplet}, Algorithm~\ref{alg:basisExchange} has time to perform more than $10^6$ iterations compared to roughly 50\,000 steps for our chain. 
      The acceptance rate of Algorithm~\ref{alg:zonoSampling} is still 10 times larger, but the time required to solve the
      linear programs at each MCMC iteration clearly hinders our algorithm in terms of CPU time.
      Both algorithms are comparable in performance, but given its large acceptance, we would expect our algorithm to perform better if it was allowed to do even only 10 times more iterations. 
      Now this is implementation-dependent, and our current implementation of Algorithm~\ref{alg:zonoSampling} is relatively naive, calling the simplex algorithm in the GLPK \citep{Oki12} solver with CVXOPT \citep{AnDaVa12} from Python. 
      We think there are big potential speed-ups to realize in the integration of linear programming solvers in our
      code.
      Moreover, we initialize our simplex algorithms randomly, while the different LPs we solve are related, so there may be additional smart mathematical speed-ups in using the path followed by one simplex instance to initialize the next.

      Finally, we note that the performance of our Algorithm~\ref{alg:zonoSampling} seems stable and independent of the structure of the graph, while the performance of the basis-exchange Algorithm~\ref{alg:basisExchange} seems more graph-dependent.
      Further investigation is needed to make stronger statements.

    \subsection{Text Summarization} 
    \label{s:text_summarization}
    
      Looking at Figures~\ref{f:xp_graph_Kn_weighted_triplet} and \ref{f:xp_graph_BA_weighted_triplet}, our algorithm will be most useful when the bottleneck is mixing vs.\,number of iterations rather than CPU time. 
      For instance, when integrating a costly-to-evaluate function against a projection DPP, the evaluation of the integrand may outweigh the cost of one iteration. 
      To illustrate this, we adapt an experiment of \citet[Section 4.2.1]{KuTa12} on minimum Bayes risk decoding for summary extraction. 
      The idea is to find a subset $Y$ of sentences of a text that maximizes
      \begin{equation}
      \label{e:rouge}
      \frac1R \sum_{r=1}^{R} \textsc{Rouge-1F} \lrp{Y, Y_r},
      \end{equation}
      where $\lrp{Y_r}_r$ are sampled from a projection DPP. \textsc{Rouge-1F} is a measure of similarity of two sets of sentences.
      We summarize this 64-sentence \href{http://www.slate.com/articles/health_and_science/science/2017/04/explaining_science_won_t_fix_information_illiteracy.html}{article} as a subset of $11$ sentences. 
      In this setting, evaluating once $\textsc{Rouge-1F}$ in the sum~\eqref{e:rouge} takes $0.1s$ on a modern laptop, while one iteration of our algorithm is $10^{-3}s$. 
      Our Algorithm~\ref{alg:zonoSampling} can thus compute \eqref{e:rouge} for $R=10\,000$ in about the same CPU time as Algorithm~\ref{alg:basisExchange}, an iteration of which costs $10^{-5}s$. 
      We show in Figure~\ref{f:xp_summary_box_plots} the value of \eqref{e:rouge} for 3 possible summaries $\lrp{Y^{(i)}}_{i=1}^3$ chosen uniformly at random in $\cB$, over $50$ independent runs.
      The variance of our estimates is smaller, and the number of different summaries explored is about $50\%$, against $10\%$ for Algorithm~\ref{alg:basisExchange}.
      Evaluating \eqref{e:rouge} using our algorithm is thus expected to be closer to the maximum of the underlying integral. Details are given in Appendix~\ref{s:app_summarization}.

  \section{Discussion}
  \label{s:discussion}
   
    We proposed a new MCMC kernel with limiting distribution being an arbitrary projection DPP. 
    This MCMC kernel leverages optimization algorithms to help making global moves on a convex body that represents the DPP. 
    We provided empirical results supporting its fast mixing when compared to the state-of-the-art basis-exchange chain of \citet{AnGhRe16, LiJeSr16}. 
    Future work will focus on an  implementation: while our MCMC chain mixes faster, when compared based on CPU time our algorithm suffers from having to solve linear programs at each iteration. 
    We note that even answering the question whether a given point belongs to a zonotope involves linear programming, so that chord-finding procedures used in slice sampling \citep[Sections~4 and~5]{Nea03} would not provide significant computational savings. 
   
    We also plan to investigate theoretical bounds on the mixing time of our Algorithm~\ref{alg:volZonoHitRun}. 
    We can build upon the work of \citet{AnGhRe16}, as our Algorithm~\ref{alg:volZonoHitRun} is also a weighted extension of our Algorithm~\ref{alg:unifZonoHitRun}, and the polynomial bounds for the vanilla hit-and-run algorithm \citep{LoVe03} already apply to the latter. 
    Note that while not targeting a DPP, our Algorithm~\ref{alg:unifZonoHitRun} already samples items with feature-based repulsion, and could be used independently if the determinantal aspect is not crucial to the application. 
    
    \begin{figure}[t]
    \vspace{-0.5em}
    \centering
      \includegraphics[width=\twofig]{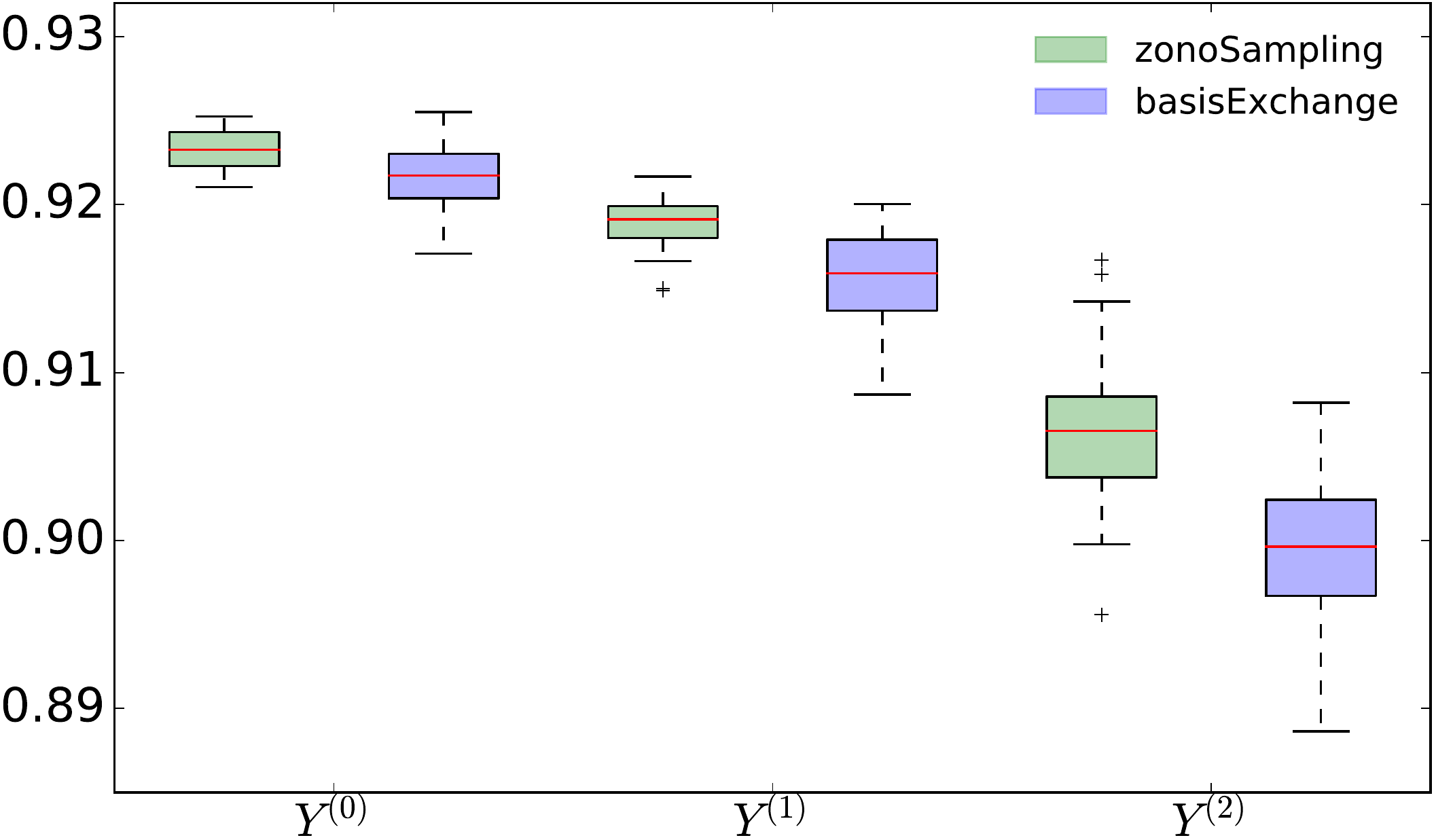}
    \vspace{-1em}
      \caption{Summary extraction results}
         \vspace{-1em}
        \label{f:xp_summary_box_plots}
  \end{figure} 
    
  {\small
  \textbf{Acknowledgments} The research presented was supported by French Ministry of
    Higher Education and Research, CPER Nord-Pas de Calais/FEDER DATA Advanced data science and technologies 2015-2020, and French National Research Agency projects \textsc{ExTra-Learn} (n.ANR-14-CE24-0010-01) and \textsc{BoB} (n.ANR-16-CE23-0003).}
  \balance
     \bibliographystyle{icml2017}
      \bibliography{biblio}

\begin{thebibliography}{45}
\providecommand{\natexlab}[1]{#1}
\providecommand{\url}[1]{\texttt{#1}}
\expandafter\ifx\csname urlstyle\endcsname\relax
  \providecommand{\doi}[1]{doi: #1}\else
  \providecommand{\doi}{doi: \begingroup \urlstyle{rm}\Url}\fi

\bibitem[Affandi et~al.(2013)Affandi, Kulesza, Fox, and Taskar]{AKFT13}
Affandi, R.~H., Kulesza, A., Fox, E.~B., and Taskar, B.
\newblock {Nystr\"om approximation for large-scale determinantal processes.}
\newblock \emph{International Conference on Artificial Intelligence and
  Statistics}, 31:\penalty0 85--98, 2013.

\bibitem[Aldous(1990)]{Ald90}
Aldous, D.~J.
\newblock The random walk construction of uniform spanning trees and uniform
  labelled trees.
\newblock \emph{SIAM Journal on Discrete Mathematics}, 3\penalty0 (4):\penalty0
  450--465, 1990.

\bibitem[Anari et~al.(2016)Anari, Gharan, and Rezaei]{AnGhRe16}
Anari, N., Gharan, S.~O., and Rezaei, A.
\newblock {Monte-Carlo Markov chain algorithms for sampling strongly Rayleigh
  distributions and determinantal point processes}.
\newblock In \emph{Conference on Learning Theory}, pp.\  23--26, 2016.

\bibitem[Andersen \& Diaconis(2007)Andersen and Diaconis]{AnDi07}
Andersen, H.~C. and Diaconis, P.~W.
\newblock {Hit and run as a unifying device}.
\newblock \emph{Journal de la Soci{\'{e}}t{\'{e}} Fran{\c{c}}aise de
  Statistique}, 148\penalty0 (4):\penalty0 5--28, 2007.

\bibitem[Andersen et~al.(2008)Andersen, Dahl, and Vandenberghe]{AnDaVa12}
Andersen, M., Dahl, J., and Vandenberghe, L.
\newblock {CVXOPT: A python package for convex optimization}, 2008.

\bibitem[Barab{\'{a}}si \& Albert(1999)Barab{\'{a}}si and Albert]{BaAl99}
Barab{\'{a}}si, A.-L. and Albert, R.
\newblock {Emergence of scaling in random networks}.
\newblock \emph{Science}, 286:\penalty0 11, 1999.

\bibitem[Bardenet \& Hardy(2016)Bardenet and Hardy]{BaHa16Sub}
Bardenet, R. and Hardy, A.
\newblock {Monte-Carlo with determinantal point processes}.
\newblock \emph{arXiv preprint arXiv:1605.00361}, 2016.

\bibitem[Broder(1989)]{Bro89}
Broder, A.
\newblock {Generating random spanning trees}.
\newblock In \emph{Foundations of Computer Science, 1989., 30th Annual
  Symposium on}, pp.\  442--447. IEEE, 1989.

\bibitem[Burton \& Pemantle(1993)Burton and Pemantle]{BuPe93}
Burton, R. and Pemantle, R.
\newblock Local characteristics, entropy and limit theorems for spanning trees
  and domino tilings via transfer impedances.
\newblock \emph{The Annals of Probability}, 21:\penalty0 1329--1371, 1993.

\bibitem[Cousins \& Vempala(2016)Cousins and Vempala]{CoVe16}
Cousins, B. and Vempala, S.
\newblock {A practical volume algorithm}.
\newblock \emph{Mathematical Programming Computation}, 8\penalty0 (2):\penalty0
  133--160, 2016.

\bibitem[Deshpande \& Rademacher(2010)Deshpande and Rademacher]{DeRa10}
Deshpande, A. and Rademacher, L.
\newblock {Efficient volume sampling for row/column subset selection}.
\newblock In \emph{Foundations of Computer Science}, 2010.

\bibitem[Dupuy \& Bach(2016)Dupuy and Bach]{DuBa16}
Dupuy, C. and Bach, F.
\newblock Learning determinantal point processes in sublinear time.
\newblock \emph{arXiv preprint arXiv:1610.05925}, 2016.

\bibitem[Dyer \& Frieze(1994)Dyer and Frieze]{DyFr94}
Dyer, M. and Frieze, A.
\newblock {Random walks, totally unimodular matrices, and a randomised dual
  simplex algorithm}.
\newblock \emph{Mathematical Programming}, 64\penalty0 (1-3):\penalty0 1--16,
  1994.

\bibitem[Feder \& Mihail(1992)Feder and Mihail]{FeMi92}
Feder, T. and Mihail, M.
\newblock {Balanced matroids}.
\newblock \emph{Proceedings of the twenty-fourth annual ACM}, pp.\  26--38,
  1992.

\bibitem[Gartrell et~al.(2017)Gartrell, Paquet, and Koenigstein]{GaPaKo16}
Gartrell, M., Paquet, U., and Koenigstein, N.
\newblock Low-rank factorization of determinantal point processes for
  recommendation.
\newblock In \emph{AAAI Conference on Artificial Intelligence}, pp.\
  1912--1918, 2017.

\bibitem[Gelman \& Rubin(1992)Gelman and Rubin]{GeRu92}
Gelman, A. and Rubin, D.~B.
\newblock Inference from iterative simulation using multiple sequences.
\newblock \emph{Statist. Sci.}, 7\penalty0 (4):\penalty0 457--472, 11 1992.

\bibitem[Gillenwater et~al.(2012)Gillenwater, Kulesza, and Taskar]{GiKuTa12}
Gillenwater, J., Kulesza, A., and Taskar, B.
\newblock Discovering diverse and salient threads in document collections.
\newblock In \emph{Joint Conference on Empirical Methods in Natural Language
  Processing and Computational Natural Language Learning}, pp.\  710--720,
  2012.

\bibitem[Hough et~al.(2006)Hough, Krishnapur, Peres, and Vir{\'{a}}g]{HKPV06}
Hough, J.~B., Krishnapur, M., Peres, Y., and Vir{\'{a}}g, B.
\newblock {Determinantal processes and independence}.
\newblock \emph{Probability surveys}, 2006.

\bibitem[Kang(2013)]{Kan13}
Kang, B.
\newblock {Fast determinantal point process sampling with application to
  clustering}.
\newblock In \emph{Neural Information Processing Systems}, pp.\  2319--2327,
  2013.

\bibitem[Kathuria et~al.(2016)Kathuria, Deshpande, and Kohli]{KaDeKo16}
Kathuria, T., Deshpande, A., and Kohli, P.
\newblock Batched gaussian process bandit optimization via determinantal point
  processes.
\newblock \emph{Neural Information Processing Systems}, pp.\  pp. 4206--4214,
  2016.

\bibitem[Kulesza \& Taskar(2012)Kulesza and Taskar]{KuTa12}
Kulesza, A. and Taskar, B.
\newblock Determinantal point processes for machine learning.
\newblock \emph{Foundations and Trends in Machine Learning}, 5\penalty0
  (2-3):\penalty0 123--286, 2012.

\bibitem[Kulesza \& Taskar(2011)Kulesza and Taskar]{KuTa11}
Kulesza, A. and Taskar, B.
\newblock $k$-dpps: Fixed-size determinantal point processes.
\newblock \emph{International Conference on Machine Learning}, pp.\
  1193--1200, 2011.

\bibitem[Lavancier et~al.(2015)Lavancier, M{\o}ller, and Rubak]{LaMoRu15}
Lavancier, F., M{\o}ller, J., and Rubak, E.
\newblock {Determinantal point process models and statistical inference}.
\newblock \emph{Journal of the Royal Statistical Society. Series B: Statistical
  Methodology}, 77\penalty0 (4):\penalty0 853--877, 2015.

\bibitem[Levin et~al.(2009)Levin, Peres, and Wilmer]{LePeWi09}
Levin, D.~A., Peres, Y., and Wilmer, E.~L.
\newblock \emph{{{M}arkov chains and mixing times}}.
\newblock American Mathematical Soc., 2009.

\bibitem[Li et~al.(2016{\natexlab{a}})Li, Jegelka, and Sra]{LiJeSr15}
Li, C., Jegelka, S., and Sra, S.
\newblock {Efficient sampling for $k$-determinantal point processes}.
\newblock In \emph{Artificial Intelligence and Statistics}, pp.\  1328--1337,
  2016{\natexlab{a}}.

\bibitem[Li et~al.(2016{\natexlab{b}})Li, Jegelka, and Sra]{LiJeSr16}
Li, C., Jegelka, S., and Sra, S.
\newblock Fast sampling for strongly rayleigh measures with application to
  determinantal point processes.
\newblock In \emph{Neural Information Processing Systems}, pp.\  4188--4196,
  2016{\natexlab{b}}.

\bibitem[Liang \& Paisley(2015)Liang and Paisley]{LiPa15}
Liang, D. and Paisley, J.
\newblock Landmarking manifolds with gaussian processes.
\newblock In \emph{International Conference on Machine Learning}, pp.\
  466--474, 2015.

\bibitem[Loper \& Bird(2002)Loper and Bird]{NLTK}
Loper, E. and Bird, S.
\newblock {NLTK}: The natural language toolkit.
\newblock In \emph{Workshop on Effective Tools and Methodologies for Teaching
  Natural Language Processing and Computational Linguistics}, pp.\  63--70,
  2002.

\bibitem[Lov\'asz \& Vempala(2003)Lov\'asz and Vempala]{LoVe03}
Lov\'asz, L. and Vempala, S.
\newblock Hit and run is fast and fun.
\newblock Technical Report MSR-TR-2003-05, 2003.

\bibitem[Luenberger \& Ye(2008)Luenberger and Ye]{LuYe08}
Luenberger, D.~G. and Ye, Y.
\newblock \emph{{Linear and nonlinear programming}}.
\newblock Springer, fourth edition, 2008.

\bibitem[Lyons(2003)]{Lyo03}
Lyons, R.
\newblock {Determinantal probability measures}.
\newblock \emph{Publications Math\'ematiques de l'Institut des Hautes \'Etudes
  Scientifiques}, 2003.

\bibitem[Macchi(1975)]{Mac75}
Macchi, O.
\newblock The coincidence approach to stochastic point processes.
\newblock \emph{Advances in Applied Probability}, 7\penalty0 (1):\penalty0
  83--122, 1975.

\bibitem[Neal(2003)]{Nea03}
Neal, R.~M.
\newblock Slice sampling.
\newblock \emph{Annals of statistics}, pp.\  705--741, 2003.

\bibitem[Oki(2012)]{Oki12}
Oki, E.
\newblock Gnu linear programming kit, version 4.61.
\newblock In \emph{Linear Programming and Algorithms for Communication Networks
  - A Practical Guide to Network Design, Control, and Management}. 2012.

\bibitem[Oxley(2003)]{Oxl03}
Oxley, J.
\newblock {What is a matroid?}
\newblock \emph{Cubo Matem\'atica Educacional}, 5.3:\penalty0 179--218, 2003.

\bibitem[Pathria \& Beale(2011)Pathria and Beale]{PaBe11}
Pathria, R.~K. and Beale, P.~D.
\newblock \emph{{Statistical Mechanics}}.
\newblock 2011.

\bibitem[Plummer et~al.(2006)Plummer, Best, Cowles, and Vines]{codaRpackage}
Plummer, M., Best, N., Cowles, K., and Vines, K.
\newblock Coda: Convergence diagnosis and output analysis for {MCMC}.
\newblock \emph{R News}, 6\penalty0 (1):\penalty0 7--11, 2006.

\bibitem[Propp \& Wilson(1998)Propp and Wilson]{PrWi98JoA}
Propp, J.~G. and Wilson, D.~B.
\newblock {How to get a perfectly random sample from a generic Markov chain and
  generate a random spanning tree of a directed graph}.
\newblock \emph{Journal of Algorithms}, 27\penalty0 (2):\penalty0 170--217,
  1998.

\bibitem[Rebeschini \& Karbasi(2015)Rebeschini and Karbasi]{ReKa15}
Rebeschini, P. and Karbasi, A.
\newblock Fast mixing for discrete point processes.
\newblock In \emph{Conference on Learning Theory}, pp.\  1480--1500, 2015.

\bibitem[Robert \& Casella(2004)Robert and Casella]{RoCa04}
Robert, C.~P. and Casella, G.
\newblock \emph{Monte-Carlo Statistical Methods}.
\newblock Springer-Verlag, New York, 2004.

\bibitem[Rudnick \& Sarnak(1996)Rudnick and Sarnak]{RuSa96}
Rudnick, Z. and Sarnak, P.
\newblock {Zeros of principal $L$-functions and random matrix theory}.
\newblock \emph{Duke Mathematical Journal}, 81\penalty0 (2):\penalty0 269--322,
  1996.

\bibitem[Smith(1984)]{Smi84}
Smith, R.~L.
\newblock {Efficient Monte-Carlo procedures for generating points uniformly
  distributed over bounded regions}.
\newblock \emph{Operations Research}, 32:\penalty0 1296--1308, 1984.

\bibitem[Snoek et~al.(2013)Snoek, Zemel, and Adams]{SnZeAd13}
Snoek, J., Zemel, R., and Adams, R.~P.
\newblock {A determinantal point process latent variable model for inhibition
  in neural spiking data}.
\newblock In \emph{Neural Information Processing Systems}, pp.\  1932--1940,
  2013.

\bibitem[Tur{\v{c}}in(1971)]{Tur71}
Tur{\v{c}}in, V.~F.
\newblock On the computation of multidimensional integrals by the monte-carlo
  method.
\newblock \emph{Theory of Probability {\&} Its Applications}, 16\penalty0
  (4):\penalty0 720--724, 1971.

\bibitem[Williams \& Seeger(2001)Williams and Seeger]{WiSe01}
Williams, C. and Seeger, M.
\newblock {Using the Nystr\"om method to speed up kernel machines}.
\newblock In \emph{Neural Information Processing Systems}, pp.\  682--688,
  2001.

\end{thebibliography}

  \appendix

  \onecolumn{

    \section{Landmarks on MNIST}
    \label{s:app_mnist}

      To illustrate our Algorithm~\ref{alg:zonoSampling}{} on a non-synthetic dataset, we take a thousand images of handwritten $1s$ from MNIST. 
      We consider the problem of finding landmarks among these $1s$, see, for example, \citet{LiPa15}. 
      To obtain a set of $r=10$ landmarks from a projection DPP, we design $r$-dimensional feature vectors of our images, the inner products of which indicate similarity. 
      Specifically, we take $\bA$ to be the coordinates of the projections of our dataset onto the first~$r$ principal directions given by a standard PCA, so that $\bA$ has rank $r$ if the whole variance is not explained by these first~$r$ eigenvalues. 
      Note that our choice of using standard PCA is arbitrary: For manifold landmarking, we could take the coordinates output by a manifold learning algorithm such as ISOMAP. 

      Running our Algorithm~\ref{alg:zonoSampling} for $10\,000$ time steps, we obtain bases with squared volumes spanning three orders of magnitude. 
      The basis in our MCMC sample with maximum squared volume is shown in the first row of Figure~\ref{f:mnist}. 
      The bottom three rows show bases drawn uniformly at random from our initial dataset, for visual comparison. 
      The left column gives the log of the ratio of the squared volume of the corresponding basis by that of the top row.

      \begin{figure*}[!ht]
       \centering
         \includegraphics[width=.95\textwidth]{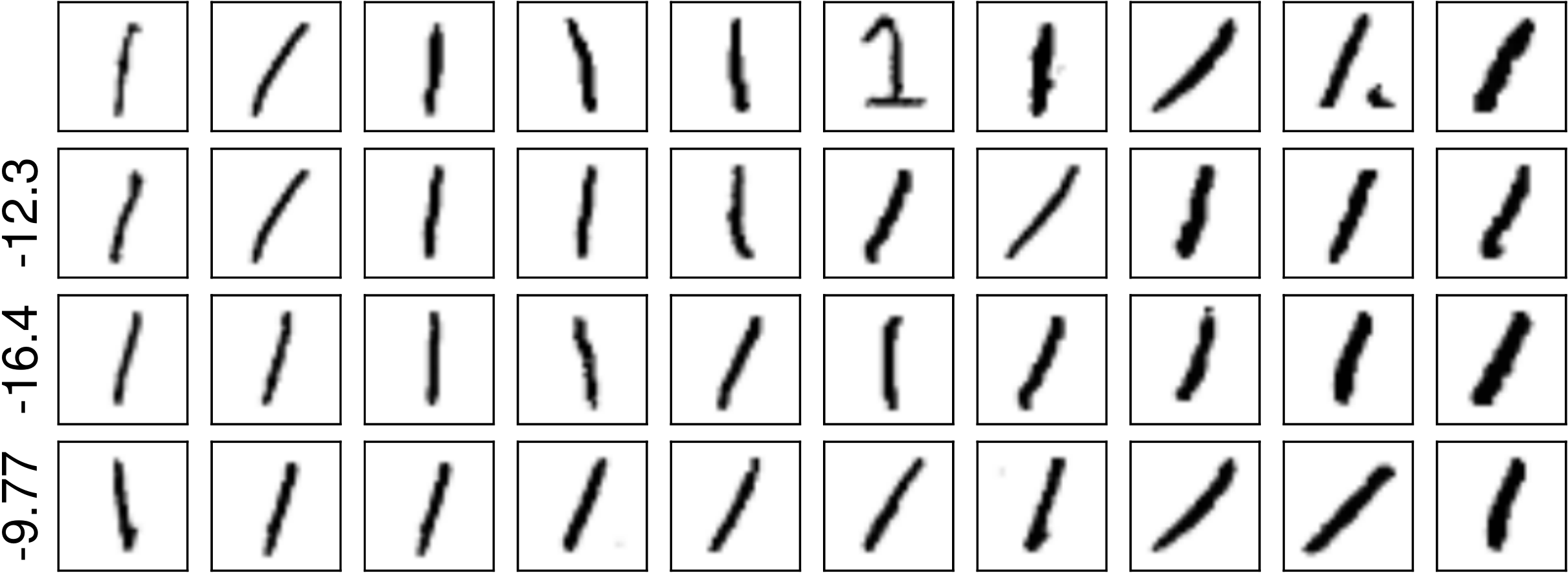}
         \caption{Results of the MNIST experiment in Section~\ref{s:app_mnist}: the MAP basis in the first row is compared to uniform random bases.}
       \label{f:mnist}
       \end{figure*}

      Visual inspection reveals that the projection DPP and our MCMC sample successfully pick up diversity among 1s: Different angles, thicknesses, and shapes are selected, while uniform random sampling exhibits less coverage. 
      This is confirmed by the log ratios, which are far from the three orders of magnitude within our chain. 

    \section{Extractive Text Summarization}
      \label{s:app_summarization}
    
      We consider the article\footnote{\url{http://www.slate.com/articles/health_and_science/science/2017/04/explaining_science_won_t_fix_information_illiteracy.html}} entitled \textit{Scientists, Stop Thinking Explaining Science Will Fix Things}. 
      In order to generate an 11-sentences summary, we build 11 features as follows.
      For each sentence, we compute its number of characters and its number of words. 
      Then, we apply a Porter stemmer \cite{NLTK} and count again the number of characters and words in each sentence. 
      In addition, we sum the tf-idf values of the words in each sentence and compute the average cosine distance to all other sentences.
      Finally, we compute the position of the sentence in the original article and generate binary features indicating positions 1–5.
      We end up with a feature matrix $\bA$ of size $11\times 64$.
    
      Next, we display 3 summaries of the article, the first one is drawn using Algorithm~\ref{alg:zonoSampling} while the 2 others are constructed by picking 11 sentences uniformly at random. In both cases, we sort the sentences in the order they originally appear.
      Samples from Algorithm~\ref{alg:zonoSampling} induce a better coverage of the article than uniform draws.

    \clearpage
    One summary drawn with Algorithm~\ref{alg:zonoSampling}:
      \begin{formal}If you are a scientist, this disregard for evidence probably drives you crazy.\end{formal} \vspace{-0.395cm}
      \begin{formal}So what do you do about it?\end{formal} \vspace{-0.395cm}
      \begin{formal}Across the country, science communication and advocacy groups report upticks in interest.\end{formal} \vspace{-0.395cm}
      \begin{formal}In 2010, Dan Kahan, a Yale psychologist, essentially proved this theory wrong.\end{formal} \vspace{-0.395cm}
      \begin{formal}If the deficit model were correct, Kahan reasoned, then people with increased scientific literacy, regardless of worldview, should agree with scientists that climate change poses a serious risk to humanity.\end{formal} \vspace{-0.395cm}
      \begin{formal}Scientific literacy, it seemed, increased polarization.\end{formal} \vspace{-0.395cm}
      \begin{formal}This lumps scientists in with the nebulous "left" and, as Daniel Engber pointed out here in Slate about the upcoming March for Science, rebrands scientific authority as just another form of elitism.\end{formal} \vspace{-0.395cm}
      \begin{formal}Is it any surprise, then, that lectures from scientists built on the premise that they simply know more (even if it's true) fail to convince this audience?\end{formal} \vspace{-0.395cm}
      \begin{formal}With that in mind, it may be more worthwhile to figure out how to talk about science with people they already know, through, say, local and community interactions, than it is to try to publish explainers on national news sites.\end{formal} \vspace{-0.395cm}
      \begin{formal}Goldman also said scientists can do more than just educate the public: The Union of Concerned Scientists, for example, has created a science watchdogteam that keeps tabs on the activities of federal agencies.\end{formal} \vspace{-0.395cm}
      \begin{formal}There's also a certain irony that, right here in this article, I'm lecturing scientists about what they might not know-in other words, I'm guilty of following the deficit model myself.\end{formal}

      \vspace{-0.3cm}
      Two summaries drawn uniformly at random:

      \begin{formal} If you consider yourself to have even a passing familiarity with science, you likely find yourself in a state of disbelief as the president of the United States calls climate scientists "hoaxsters" and pushes conspiracy theories about vaccines.\end{formal} \vspace{-0.395cm}
      \begin{formal} In fact, it's so wrong that it may have the opposite effect of what they're trying to achieve.\end{formal} \vspace{-0.395cm}
      \begin{formal} Respondents who knew more about science generally, regardless of political leaning, were better able to identify the scientific consensus-in other words, the polarization disappeared.\end{formal} \vspace{-0.395cm}
      \begin{formal} In fact, well-meaning attempts by scientists to inform the public might even backfire.\end{formal} \vspace{-0.395cm}
      \begin{formal} Psychologists, aptly, dubbed this the "backfire effect."\end{formal} \vspace{-0.395cm}
      \begin{formal} But if scientists are motivated to change minds-and many enrolled in science communication workshops do seem to have this goal-they will be sorely disappointed.\end{formal} \vspace{-0.395cm}
      \begin{formal} That's not to say scientists should return to the bench and keep their mouths shut.\end{formal} \vspace{-0.395cm}
      \begin{formal} Goldman also said scientists can do more than just educate the public: The Union of Concerned Scientists, for example, has created a science watchdogteam that keeps tabs on the activities of federal agencies.\end{formal} \vspace{-0.395cm}
      \begin{formal} But I'm learning to better challenge scientists' assumptions about how communication works.\end{formal} \vspace{-0.395cm}
      \begin{formal} It's very logical, and my hunch is that it comes naturally to scientists because most have largely spent their lives in school-whether as students, professors, or mentors-and the deficit model perfectly explains how a scientist learns science.\end{formal} \vspace{-0.395cm}
      \begin{formal} So in the spirit of doing better, I'll not just write this article but also take the time to talk to scientists in person about how to communicate science strategically and to explain why it matters.\end{formal}

      \begin{formal} And it's not just Trump-plenty of people across the political spectrum hold bizarre and inaccurate ideas about science, from climate change and vaccines to guns and genetically modified organisms.\end{formal} \vspace{-0.395cm}
      \begin{formal} It seems many scientists would take matters into their own hands by learning how to better communicate their subject to the masses.\end{formal} \vspace{-0.395cm}
      \begin{formal} I've always had a handful of intrepid graduate students, but now, fueled by the Trump administration's Etch A Sketch relationship to facts, record numbers of scientists are setting aside the pipette for the pen.\end{formal} \vspace{-0.395cm}
      \begin{formal} This is because the way most scientists think about science communication-that just explaining the real science better will help-is plain wrong.\end{formal} \vspace{-0.395cm}
      \begin{formal} Before getting fired up to set the scientific record straight, scientists would do well to first considerthe science of science communication.\end{formal} \vspace{-0.395cm}
      \begin{formal} If the deficit model were correct, Kahan reasoned, then people with increased scientific literacy, regardless of worldview, should agree with scientists that climate change poses a serious risk to humanity.\end{formal} \vspace{-0.395cm}
      \begin{formal} Scientific literacy, it seemed, increased polarization.\end{formal} \vspace{-0.395cm}
      \begin{formal} Presenting facts that conflict with an individual's worldview, it turns out, can cause people to dig in further.\end{formal} \vspace{-0.395cm}
      \begin{formal} I spoke with Gretchen Goldman, research director of the Union of Concerned Scientists' Center for Science and Democracy, which offers communication and advocacy workshops.\end{formal} \vspace{-0.395cm}
      \begin{formal} Communication that appeals to values, not just intellect, research shows, can be far more effective.\end{formal} \vspace{-0.395cm}
      \begin{formal} I hope they end up doing the same.\end{formal}}

\end{document}